\documentclass[a4paper]{scrarticle}
\usepackage[utf8]{inputenc}
\usepackage[english]{babel}
\usepackage{multirow}
\usepackage{amsmath,amssymb,amsthm}
\usepackage{microtype}
\usepackage{xcolor}
\usepackage{tikz}
\usepackage{caption}
\usepackage{subcaption}
\usepackage{rotating}
\usepackage[hidelinks]{hyperref}
\usepackage{bm}
\usetikzlibrary{arrows}
\usetikzlibrary{patterns}
\usetikzlibrary{positioning}
\usepackage{graphics}
\usepackage{nameref}
\usepackage{authblk}
\usepackage{algorithm}
\usepackage[indLines=false]{algpseudocodex}
\usepackage{xspace}

\definecolor{bittersweet}{rgb}{1.0, 0.44, 0.37}
\definecolor{airforceblue}{rgb}{0.36, 0.54, 0.66}
\definecolor{bostonuniversityred}{rgb}{0.8, 0.0, 0.0}
\definecolor{burgundy}{rgb}{0.65, 0.0, 0.13}
\definecolor{lightgray}{rgb}{0.9, 0.9, 0.9}

\newcommand{\reals}{{\mathbb{R}}}
\newcommand{\Uglob}{\mathcal{U}_{glob}}
\newcommand{\Uloc}{\mathcal{U}_{loc}}
\newcommand{\lamloc}{\lambda}
\newcommand{\tree}{{T}}
\newcommand{\Trees}{\mathcal{T}}
\newcommand{\pc}{\bm{c}}
\newcommand{\X}{\mathcal{X}}

\newtheorem{theorem}{Theorem}[section]
\newtheorem{definition}{Definition}[section]
\newtheorem{cor}{Corollary}[section]

\newcommand{\OneSol}{H$_1$\xspace}
\newcommand{\OneSolGlob}{H$_1^\textnormal{glob}$\xspace}
\newcommand{\OneSolLoc}{H$_1^\textnormal{loc}$\xspace}
\newcommand{\HTree}{H$_\textnormal{tree}$\xspace}
\newcommand{\HTreeGlob}{H$_\textnormal{tree}^\textnormal{glob}$\xspace}
\newcommand{\HTreeLoc}{H$_\textnormal{tree}^\textnormal{loc}$\xspace}
\newcommand{\HSol}{H$_\textnormal{sol}$\xspace}
\newcommand{\HSolGlob}{H$_\textnormal{sol}^\textnormal{glob}$\xspace}
\newcommand{\HSolLoc}{H$_\textnormal{sol}^\textnormal{loc}$\xspace}

\newcommand{\HAlt}{H$_\textnormal{alt}$\xspace}
\newcommand{\HAltGlob}{H$_\textnormal{alt}^\textnormal{glob}$\xspace}
\newcommand{\HAltLoc}{H$_\textnormal{alt}^\textnormal{loc}$\xspace}

\newcommand{\NomTree}{$\tree_{\text{nom}}$\xspace}
\newcommand{\SG}{SG\xspace}

\newcommand{\LSAP}{LSAP}
\newcounter{equationset}

\title{Towards Robust Interpretable Surrogates for Optimization}
\author[1]{Marc Goerigk}
\author[1]{Michael Hartisch}
\author[1]{Sebastian Merten\thanks{Corresponding author. Email: sebastian.merten@uni-passau.de}}\date{}
\affil[1]{Business Decisions and Data Science, University of Passau,\authorcr Dr.-Hans-Kapfinger-Str. 30, 94032 Passau, Germany}

\begin{document}

\tikzset{
    EdgeStyle/.append style = {->,red} 
}

\maketitle
\begin{abstract} 
An important factor in the practical implementation of optimization models is the acceptance by the intended users. This is influenced among other factors by the interpretability of the solution process. Decision rules that meet this requirement can be generated using the framework for inherently interpretable optimization models. In practice, there is often uncertainty about the parameters of an optimization problem. An established way to deal with this challenge is the concept of robust optimization.

The goal of our work is to combine both concepts: to create decision trees as surrogates for the optimization process that are more robust to perturbations and still inherently interpretable. For this purpose we present suitable models based on different variants to model uncertainty, and solution methods. Furthermore, the applicability of heuristic methods to perform this task is evaluated. Both approaches are compared with the existing framework for inherently interpretable optimization models.
\end{abstract}

\noindent\textbf{Keywords:} data-driven optimization; interpretability and explainability in optimization; robust optimization; decision trees

\section{Introduction}
\subsection{Motivation}
Most optimization models and solution techniques are designed for practical application. Similarly, machine learning approaches aim for real-world use, with increasing emphasis on enhancing user acceptance and comprehension.
While explainability in artificial intelligence (AI) has become a major research focus, the development of approaches that prioritize user acceptance and comprehensibility in the realm of optimization remains relatively niche. This is somewhat surprising, because much like in AI, solutions to optimization problems are typically generated by solvers and algorithms that are not easily understood by all stakeholders. However, having insights into how and why specific solutions are chosen can significantly enhance the acceptance of those solutions. Conversely, a lack of transparency in the decision-making process can undermine trust and cause discontent, resulting in poor adoption of optimized decisions and ultimately rendering the optimization process ineffective.

To increase the comprehensible of machine learning models, much focus has been put on their explainability and interpretability.  Interpretability, unlike explainability, requires that the entire process---from instance to solution---is inherently comprehensible, rather than merely offering post-hoc justifications on why a specific solution was (not) chosen. As the interpretability of a method implies its explainability, interpretability is what we should strive for \cite{rudin2019stop}.

In optimization, several efforts have been made to enhance the explainability of solutions only recently. However, much less attention has been given to developing interpretable, and hence easily comprehensible, surrogates for the optimization process. Interpretable surrogates must not only be easily comprehensible, but also attain high quality solutions, which calls for a trade-off between these two conflicting goals. The proposed methods for surrogates of optimization problems base their process of attaining the surrogate on historical samples or observations. The common question in machine learning is, to what extend the found model (in our case the surrogate)---that performs well on historical observations---performs well on new data that has not been part of the training data. We go a step further and ask the question to what extend the surrogate performs well if disturbed data is used to feed the surrogate, while the obtained solution is evaluated on the true, undisturbed data.

In this work, we aim at identifying high-quality, interpretable surrogates that are robust to perturbations in the observation of instance parameters. We focus on decision trees as surrogates, which query instance parameters in their splits and provide solutions to the optimization problem at their leaves. The main idea is that the instance parameters guiding the path through the decision tree may not have been correctly observed. As a result, the traversal could lead to an undesirable leaf, and a solution might be implemented that is not suited to the actual parameters at hand.

\subsection{Motivating Example} \label{subs:intro:example}
In the following, we want to motivate our research question with an example. For a given optimization problem, we want to make use of the framework of \cite{goerigk2023framework} to find an interpretable surrogate in form of a decision tree for the optimization process. We want to evaluate the performance of this decision tree in the presence of disturbances and compare it to an alternative surrogate model. This alternative decision tree remains interpretable, but is constructed with robustness as a priority.

Let us consider the graph given in Figure~\ref{fig:graph}. The goal is to find the shortest path from node $s$ to node $t$. There are only two feasible solutions, the path containing the edges $e_1$ and $e_2$, and the path containing the edges $e_3$ and $e_4$. We will refer to them as path A ($s$-$1$-$t$) and path B ($s$-$2$-$t$), respectively. We assume that the costs to traverse edges in this graph can vary. Table~\ref{tab:observations} contains five historical samples for the edge costs, which we will use as training data.
\begin{figure}[htb]
\begin{minipage}[t]{\textwidth}
\begin{minipage}[t][3.7cm][t]{.48\textwidth}
 \centering
        \begin{tikzpicture}[shorten >=1pt,node distance=1cm, thick,main node/.style={circle,fill=blue!20,draw,font=\sffamily\Large\bfseries}, node/.style={circle,draw,font=\sffamily\Large\bfseries}]
            \node[node, minimum size=.8cm] (0) {$s$};
            \node[node, minimum size=.8cm] (1) [right = of 0] {$1$};
            \node[node, minimum size=.8cm] (2) [above = of 0] {$2$};
            \node[node, minimum size=.8cm] (t) [above = of 1] {$t$};
            \path[every node/.style={font=\sffamily\small}]
            (0) edge[->, color=black] node [below] {$e_1$} (1)
                edge[->, color=black] node [left] {$e_3$} (2)
            (2) edge[->, color=black] node [below] {$e_4$} (t)
            (1) edge[->, color=black] node [right] {$e_2$} (t);
        \end{tikzpicture}
\captionof{figure}{Example graph\label{fig:graph}}

\end{minipage}%
\begin{minipage}[b][3.7cm][t]{.48\textwidth}
    \centering
        \captionof{table}
{Historical observations\label{tab:observations}}
        \begin{tabular}{c|cccc}
             & $e_1$ & $e_2$ & $e_3$ & $e_4$ \\
            \hline
            $\bm{c}_1$ & 0     & 1     & 7     & 9  \\
            $\bm{c}_2$ & 1     & 5     & 3     & 10 \\
            $\bm{c}_3$ & 9     & 4     & 4     & 9  \\
            $\bm{c}_4$ & 9     & 10     & 5    & 7  \\
            $\bm{c}_5$ & 10    & 8     & 2     & 2
        \end{tabular}
\end{minipage}
\end{minipage}
\end{figure}
First, we want to find a decision tree that maps cost observations to solutions of this shortest path problem, and that is optimal with respect to the average costs across the five observations. We refer to this measure as the \textit{nominal} costs. To ensure interpretability, we limit ourselves to only consider univariate decision trees with a maximum depth of two. For the sake of simplicity, we further restrict ourselves to the use of branching thresholds on the midpoints between two observed values for the same edge, e.g., we only consider the thresholds $0.5$, $5$, and $9.5$ for edge~$e_1$.
\begin{figure}[htb]
\centering
\begin{subfigure}{.5\textwidth}
    \centering
        \begin{tikzpicture}[%
        level 1/.style={sibling distance=2cm},
        every node/.style = {line width=0.3mm, draw, minimum width=.5cm, minimum height=.75cm, anchor=north},
        edge from parent path={[line width=0.3mm] (\tikzparentnode.south) -- (\tikzchildnode.north)}]
        ]
        \node[shape=rectangle, rounded corners, minimum width=2cm] {$\hat{c}_1 \leq 5$}
        child {node[shape=circle, pattern color = bittersweet, pattern = north east lines] {A}}
        child {node[shape=circle, pattern color = airforceblue, pattern = dots] {B}};
        \end{tikzpicture}
        \vspace{2cm}
    \caption{Decision tree}
  \label{fig:sub1}
\end{subfigure}%
\begin{subfigure}{.5\textwidth}
    \centering
        \begin{tikzpicture}
        \fill [pattern = dots, pattern color=airforceblue](2.5,0) rectangle ++(2.75,5.25); 
        \fill [pattern= north east lines, pattern color=bittersweet](0,0) rectangle ++(2.5,5.25); 
        \draw[->, line width=0.3mm] (0, 0) -- (0, 5.5);
        \draw[->, line width=0.3mm] (0, 0) -- (5.5, 0);
        \node at (-0.25, 5.5) {$\hat{c}_2$};
        \node at (5.5, -0.25) {$\hat{c}_1$};
        \foreach \x in {0, 2, 4, 6, 8, 10}
        {
          \draw[line width=0.3mm] (\x/2,0) -- (\x/2,-0.2);
          \node at (\x/2,-0.5) {$\x$};
          \draw[line width=0.3mm] (0, \x/2) -- (-0.2, \x/2);
          \node at (-0.5, \x/2) {$\x$};
         }
        \draw[dashed, ->, line width=0.25mm] (2.5, 0) -- (2.5, 5.25);
        \draw[->, color=burgundy, line width=0.4mm](5,4) -- (2.5,4);
        \draw[fill,black] (5,4) circle [radius=2pt];
        \node at (5.35, 4) {$5$};
        \draw[fill,black] (4.5,5) circle [radius=2pt];
        \draw[fill,burgundy] (2.5,4) circle [radius=2pt];
        \node at (2.2, 4) {$\color{burgundy} 5'$};
        \node at (4.85, 5) {$4$};
        \draw[fill,black] (4.5,2) circle [radius=2pt];
        \node at (4.85, 2) {$3$};
        \draw[fill,black] (0.5,2.5) circle [radius=2pt];
        \node at (0.85, 2.5) {$2$};
        \draw[fill,black] (0,0.5) circle [radius=2pt];
        \node at (0.35, 0.5) {$1$};
        \end{tikzpicture}
    \caption{Feature space and perturbation}
    \label{fig:sub2}
\end{subfigure}
\caption{Decision tree optimal for the nominal case}
\label{fig:tree1}
\end{figure}
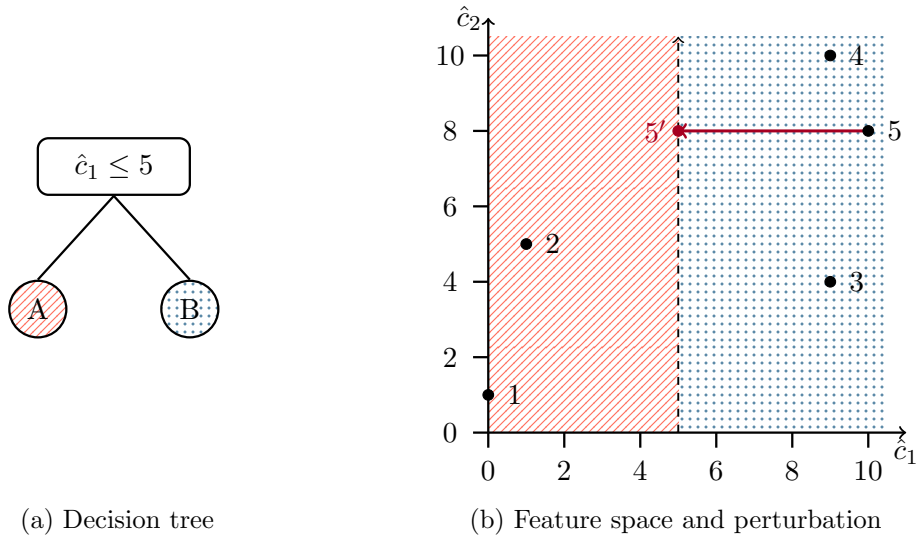
Figure~\ref{fig:tree1} illustrates the structure of an optimal decision tree for the nominal setting and its implied split in a two-dimensional projection of the feature space.
For a given cost vector $\hat{\bm{c}}\in\mathbb{R}^4$, we have to verify whether the costs of edge $e_1$ are less or equal to five. If this is the case, we will choose path A, and choose path B otherwise. It can be seen that each historical observation is assigned its optimal solution. We are able to achieve an objective value of $36$ for this nominal problem.

Now, consider that we might not be able to observe the costs of a new cost vector correctly. These perturbations will influence the way we use the decision tree to assign solutions. Note that in our setting, this interference affects the user's observation, but does not affect the actual costs of paths. In a \textit{robust} decision tree, we want to take the worst-case disturbance of the observations into account. For this example, imagine an adversary with a budget of five, which can be distributed over all historical observations to perturb an observation ($\hat{\bm{c}}_j$) of the actual costs $\bm{c}_j$. In case of the nominal tree, a worst-case perturbation occurs in cost vector $\bm{c}_5$, where the observed cost of edge $e_1$ may become $5$ and hence a user would falsely select solution A, see Figure~\ref{fig:tree1}. This results in a much worse objective value of 18 in this scenario, compared to only 4 in the nominal case, yielding overall costs of 50.

In Figure~\ref{fig:tree2}, another decision tree and its implied splits are represented. By applying it to our given historical observations, we also achieve a nominal objective value of 36. However, this tree hedges against the adversarial attack which was observed in the first setting. Hence, it is not possible to perturb the observed costs of edge $e_1$ in cost vector $\bm{c}_5$ given the budget in a way which results in an assignment of $\bm{c}_5$ to path A. A perturbation with the worst implications---increasing the observed cost of edge $e_2$ in observation $\bm{c}_2$---results in an objective value of only 41 using this tree.

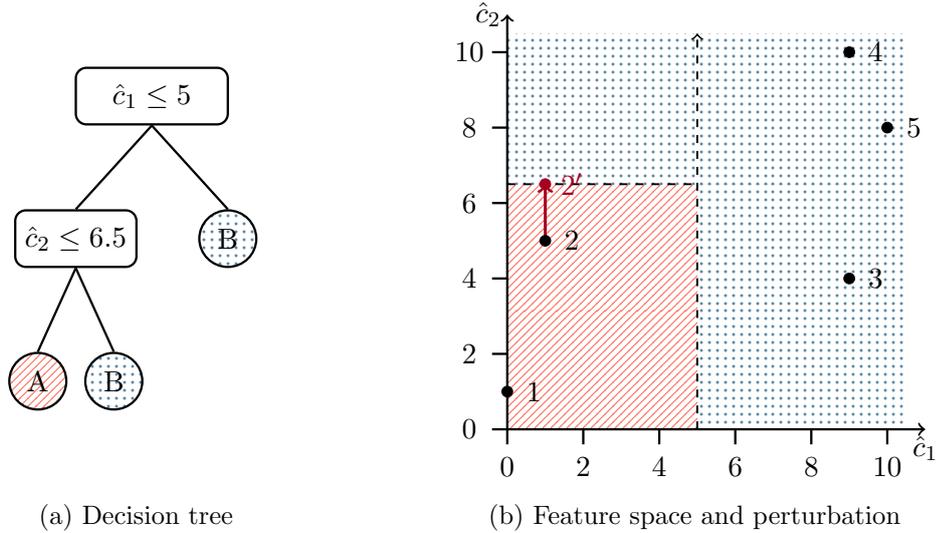
\begin{figure}[htb]
\centering
\begin{subfigure}{.5\textwidth}
    \centering
        \begin{tikzpicture}[%
        level 1/.style={sibling distance=2cm},
        level 2/.style={sibling distance=1cm},
        every node/.style = {line width=0.3mm, draw, minimum width=.5cm, minimum height=.75cm, anchor=north},
        edge from parent path={[line width=0.3mm] (\tikzparentnode.south) -- (\tikzchildnode.north)}]
        ]
        \node[shape=rectangle, rounded corners, minimum width=2cm] {$\hat{c}_1 \leq 5$}
        child {node[shape=rectangle, rounded corners] {$\hat{c}_2 \leq 6.5$}
            child {node[shape=circle, pattern color = bittersweet, pattern = north east lines] {A} }
            child {node[shape=circle, pattern color = airforceblue, pattern = dots] {B} }
        }
        child {node[line width=0.3mm, shape=circle, pattern color = airforceblue, pattern = dots] {B} };
        \end{tikzpicture}
        \vspace{1cm}
    \caption{Decision tree}
  \label{fig:sub3}
\end{subfigure}%
\begin{subfigure}{.5\textwidth}
    \centering
        \begin{tikzpicture}
        \fill [pattern = dots, pattern color=airforceblue](2.5,0) rectangle ++(2.75,3.25); 
        \fill [pattern= north east lines, pattern color=bittersweet](0,0) rectangle ++(2.5,3.25); 
        \fill [pattern= dots, pattern color=airforceblue](2.5,3.25) rectangle ++(2.75,2); 
        \fill [pattern= dots, pattern color=airforceblue](0,3.25) rectangle ++(2.5,2); 
        \draw[->, line width=0.3mm] (0, 0) -- (0, 5.5);
        \draw[->, line width=0.3mm] (0, 0) -- (5.5, 0);
        \node at (-0.25, 5.5) {$\hat{c}_2$};
        \node at (5.5, -0.25) {$\hat{c}_1$};
        \foreach \x in {0, 2, 4, 6, 8, 10}
        {
          \draw[line width=0.3mm] (\x/2,0) -- (\x/2,-0.2);
          \node at (\x/2,-0.5) {$\x$};
          \draw[line width=0.3mm] (0, \x/2) -- (-0.2, \x/2);
          \node at (-0.5, \x/2) {$\x$};
         }
        \draw[dashed, ->, line width=0.25mm] (2.5, 0) -- (2.5, 5.25);
        \draw[dashed, -, line width=0.25mm] (0, 3.25) -- (2.5, 3.25);
        \draw[fill,black] (5,4) circle [radius=2pt];
        \node at (5.35, 4) {$5$};
        \draw[fill,black] (4.5,5) circle [radius=2pt];
        \node at (4.85, 5) {$4$};
        \draw[fill,black] (4.5,2) circle [radius=2pt];
        \node at (4.85, 2) {$3$};
        \draw[->, color=burgundy, line width=0.4mm](0.5,2.5) -- (0.5,3.251);
        \draw[fill,black] (0.5,2.5) circle [radius=2pt];
        \node at (0.85, 2.5) {$2$};
        \draw[fill,burgundy] (0.5,3.251) circle [radius=2pt];
        \node at (0.85, 3.251) {$\color{burgundy} 2'$};
        \draw[fill,black] (0,0.5) circle [radius=2pt];
        \node at (0.35, 0.5) {$1$};
        \end{tikzpicture}
    \caption{Feature space and perturbation}
    \label{fig:sub4}
\end{subfigure}
\caption{Decision tree optimal for the robust case}
\label{fig:tree2}
\end{figure}

This example highlights that the performance of decision trees in an environment with uncertain parameters can significantly vary from its assumed quality when trained with respect to nominal data. It can be seen that it is possible to shrink this variance by considering the uncertainty during constructing the trees. We furthermore have seen that hedging against perturbation not necessarily comes with the price of a genuinely worse performance in the nominal setting. To find a robust decision tree, we need to take both the structure of the tree into account, and which solutions we assign to the leaves.

\subsection{Related Work} \label{subs:intro:literature}
As explainability and interpretability in optimization is still an emerging research field, the body of existing work is relatively small.
Research on explainability started out with domain-specific approaches \cite{vcyras2019argumentation} and recently has seen a surge of more general methods, e.g.\ for multi-stage stochastic optimization \cite{tierney2022explaining}, linear optimization \cite{kurtz2024counterfactual} and multi-objective optimization \cite{corrente2024explainable}. Explanations for mathematical optimization are often data-driven \cite{forel2023explainable,aigner2024framework} or use a counterfactual approach \cite{korikov2023objective,kurtz2024counterfactual}.
Most of the existing work on interpretable approaches is problem-specific, for example considering the stopping problem \cite{ciocan2022interpretable}, COVID-19 testing \cite{bastani2022interpretable}, clustering \cite{bertsimas2021interpretable,carrizosa2023clustering} and project scheduling \cite{portoleau2023robust}. Optimal policy trees \cite{amram2022optimal} and optimal prescriptive trees \cite{bertsimas2019optimal} have been proposed to obtain interpretable prescription policies, mapping instances to a predefined set of so-called treatment options.
Furthermore, the framework presented in \cite{goerigk2023framework} is suitable to describe the problem of finding inherently interpretable surrogates for a broad class of optimization problems.
In that framework, and also in the work presented in this paper, the parameters used as input for the surrogate are limited to cost parameters and the surrogate's output is a specific solutions to the optimization problem.
In \cite{goerigk2024feature}, this restriction was lifted, enabling a feature-based description of both the instance and the solution, which then serve as the input and output of the surrogate, respectively. 

In this work, we focus on using decision trees as surrogates for the optimization process. Our approach is data-driven, requiring a set of sample data points to evaluate the performance of the derived tree. Specifically, the surrogate model should perform well on average for these samples, even under worst-case perturbations of their observed parameters. This worst-case perspective is the core idea of robust optimization, which does not require known probabilities and is frequently used in theory and practice \cite{gorissen2015practical, goerigkhartisch2024rob}.
One of the key challenges when applying robust optimization is the selection of a suitable uncertainty set \cite{bertsimas2018data,chassein2019algorithms}. In this work, we assume the perturbations on the parameter observations to be restricted by variants of budgeted uncertainty \cite{bertsimas2003robust,bertsimas2004price}.

In the context of decision trees, robustness has been used for several types of classification problems in machine learning, e.g., to make them more robust against adversarial examples \cite{chen2019robust,vos2021efficient, vos2022robust} and  perturbations of the data \cite{justin2022optimal}. Other approaches aimed at improving the stability of decision trees \cite{bertsimas2023improving}.  Furthermore, in \cite{portoleau2023robust} the authors aim at finding robust decision trees for project scheduling which hedge against uncertain activity duration.
For the interested reader, we also refer to recent surveys on decision trees \cite{blockeel2023decision,costa2023recent}.

The main difference of the aforementioned works to our approach is that our surrogates do not output predefined classification labels, but solutions to optimization problems. In particular, there is an exponential number of potential outputs in our case, in contrast to a discrete, predefined set of labels. Another difference is that in case of classification, an error on the training set is minimized (e.g.\ minimize the number of misclassifications), while we, on the other hand, are dealing with cost deviations for non-optimal assigned solutions that depend on the true cost scenario as well as the provided solution. Hence, there is a connection to the predict-then-optimize framework \cite{elmachtoub2022smart}, where the criterion for the prediction process is not the accuracy of the prediction itself, but its effect on the subsequent (optimization) process that uses the prediction. In this realm, advances have been made to improve comprehensibility \cite{blanquero2023explainable}. While there are similarities in the evaluation function, it is important to note that we do not predict parameters; instead, we map parameters directly to solutions.

\subsection{Contribution and Structure of the Paper}

\paragraph{Contribution of this paper.}
Our work builds upon the framework presented in \cite{goerigk2023framework}, which introduced interpretable surrogates for optimization.  Their key idea is to generate surrogates for the optimization process, which means that we focus on finding univariate decision trees to replace the black-box optimization process. We assume that the parameters of an optimization instance are uncertain due to, e.g., lack of possibilities for measurement or inaccuracies. We investigate a robust approach that aims at finding a surrogate that is optimal with respect to the worst-case of parameter observation perturbations allowed by the specified uncertainty set. We consider two types of budgeted uncertainty sets and develop exact and heuristic solution approaches. Furthermore, we analyze the complexity of our setting and note special cases that become easier to solve. In computational experiments, we consider the impact of the choice of uncertainty set as well as the size of the uncertainty, and compare to the previous, nominal approach. We find that at a small cost in terms of nominal performance, our robust models can achieve considerably better performance on disturbed observations, and even on new observations that are taken out-of-sample.

\paragraph{Course of the work.}
In Section~\ref{sec:optApproach} we introduce the problem formulation, including the considered types of uncertainty, and present an iterative solution method for generating optimal robust surrogates in form of a decision tree. Since this method is computationally expensive, several heuristics are presented in Section~\ref{sec:heuristics}.
In Section~\ref{sec:complexity} we elaborate on the interconnection of the presented heuristics and the underlying problem, as well as on computational complexity results. We present computational experiments in Section~\ref{sec:experiments} and conclude the paper in Section~\ref{sec:conclusion}.

\section{Robust Interpretable Surrogates\label{sec:optApproach}}

\subsection{Problem Formulation and Uncertainty Sets}
\label{subs:optApproach:uncertainty}

Consider a minimization problem with a set of feasible solutions $\mathcal{X} \subseteq \mathbb{R}^n$, a linear objective function, and $N\in\mathbb{N}$ historical samples $\pc_1,\ldots, \pc_N\in\mathbb{R}^n$ of realizations of the objective function vector. Throughout the paper, we use the notation $[N]=\{1,\ldots,N\}$ to denote index sets and for simplicity of presentation, assume that the problem $\min_{\bm{x}\in\mathcal{X}} \bm{c}^\top\bm{x}$ can be represented as a mixed-integer linear program (MIP). The framework presented in \cite{goerigk2023framework} aims at finding a surrogate for the optimization process that maps realizations of the objective function vector to solutions. We investigate the case that the observed objective function values are perturbed, affecting the mapping and potentially resulting in solutions that perform poorly under the true costs.
Hence, we assume that the cost parameters of a given instance are fixed, but the observed values can be perturbed, meaning they may deviate from the true underlying values, which do not change. With this perspective, the goal is to find a robust surrogate for the optimization process in which a decision tree maps (disturbed) cost scenario observations to corresponding solutions. Here, robustness implies achieving the best average performance over all samples, even under the worst-case realization of their observed values within the chosen uncertainty set. 

Let $\Trees \subseteq\{\tree:\mathbb{R}^n \rightarrow \X \}$ denote the set of decision trees that map cost vectors to solutions. Note that for the sake of obtaining easily comprehensible surrogates, we restrict $\Trees$ to contain only  univariate trees and limit the tree's depth to a specified maximum. In particular, every split of the tree queries the cost of a single item. Furthermore, we demand that splits are based on the historical samples, i.e., we assume that a split on item $i\in[n]$ uses a threshold from the interval $[\min_{j\in[N]}c_{j,i},\max_{j\in[N]}c_{j,i}]$.

For a disturbed observation $\hat{\pc} \in \mathbb{R}^n$ corresponding to sample $\pc$, let $\tree(\hat{\pc}) \in \X$ be the solution this observation is mapped to when using decision tree $\tree\in \Trees$. Note that the resulting cost is then given by $\pc^\top  \tree(\hat{\pc})$, as only the observation and not the cost vector itself is disturbed. Hence, the problem of finding an optimal robust decision tree can be stated as
\begin{equation}
\min_{\tree \in \Trees} \max_{(\bm{\xi}_1,\ldots, \bm{\xi}_N) \in \mathcal{U}} \sum_{j \in [N]} \bm{c}_j^\top \tree(\bm{c}_j+\bm{\xi}_j)\  ,
\tag{RIT}\label{eq::TheProblem}
\end{equation} 
where $\bm{\xi}_j\in \mathbb{R}^{N}$ specifies the perturbation applied to the observations of sample $j\in [N]$, and $\mathcal{U}$ is the set of possible perturbations, which we call the uncertainty set. The abbreviation RIT stands for the ``robust interpretable tree'' problem.

In this paper, we consider a variant of so-called budgeted uncertainty \cite{bertsimas2004price}, where the deviation from the nominal costs cannot exceed a given budget.
We consider two types of budgeted uncertainty which differ in their way of allowing the budget to be spent. The set  $\Uglob$ represents the uncertainty set for a given global budget. The adversary can perturb the observations by in total $\Gamma^{\text{glob}}\in\mathbb{R}$. This setting allows large perturbations of one single observation, whilst other observations remain equal to the nominal ones.
Formally, this global budgeted uncertainty set is defined as
\begin{equation*}
    \Uglob(\Gamma^{\text{glob}}) = \{  (\bm{\xi}_1,\ldots,\bm{\xi}_N)^\top\in \reals^{N\times n} : \ \sum_{j \in [N]} \sum_{i \in[n]}  |\xi_{j,i}| \leq \Gamma^{\text{glob}} \}.
\end{equation*}
In addition to this concept of a global budget, we explore an uncertainty set $\Uloc$ that imposes a budget on the perturbation for each individual observation, which we call a local budget. This local budgeted uncertainty set is defined as
\begin{equation*}
    \Uloc(\Gamma^{\text{loc}}) = \{ (\bm{\xi}_1,\ldots,\bm{\xi}_N)^\top \in \reals^{N\times n} : \sum_{i \in [n]} |\xi_{j,i}| \leq \Gamma^{\text{loc}} \ \forall j \in [N] \}.
\end{equation*}
Observe that $\Uglob(\Gamma) \subseteq \Uloc(\Gamma)$ for any value of $\Gamma$, which means that the global budgeted uncertainty set is more restrictive than the local one, if we use the same budget. However, we obtain $\Uloc(\Gamma) \subseteq \Uglob(N\cdot\Gamma)$, i.e., the global set becomes more powerful for an appropriate scaling of the budget.

\subsection{Iterative Solution Approach\label{sec:iterative}}
We tackle Problem~\eqref{eq::TheProblem} via an iterative approach, using the well-known method of scenario generation, see e.g. \cite{aissi2009min,zeng2013solving} and \cite[Chapter~5.1]{goerigkhartisch2024rob}.
During the first step, a decision tree is generated by solving an MIP, referred to as the master problem. The model considers only a subset of perturbation scenarios from the uncertainty set. Note that finding an optimal binary decision tree is already NP-hard \cite{laurent1976constructing}.

In the second step, we solve the adversary problem. An instance of this problem takes in a decision tree (as output from the master problem) and checks the average cost resulting from a worst-case perturbation of cost observations under the considered uncertainty set. The goal is to find a perturbation that maximizes the total cost across all samples, where the solutions are determined by traversing the decision tree using the perturbed observations. In case the found objective value differs from the objective value of the master problem, the worst-case perturbation is added to the subset of perturbation scenarios and the process is repeated. The method terminates as soon as the objective value of master and adversary problem is equal.

Let us now take a closer look at the master problem. Note that the description of the master problem is independent of the considered uncertainty set. Let $\mathcal{U}'$ be a finite subset of the uncertainty set. To simplify the explanation, we can assume that $\mathcal{U}' = \{(\bm{0},\ldots,\bm{0})\}$ at the start of the algorithm, meaning that only the scenario that does not perturb the historical samples is considered, which is a valid subset for both of the considered uncertainty sets. For easier notation, let $S=\vert \mathcal{U}'\vert$ be the current number of scenarios considered.  In contrast to the model presented in \cite{goerigk2023framework}, we use the stronger flow-based formulation to model the decision tree \cite{justin2022optimal,aghaei2024strong}. To this end, we need to introduce some notation. Let $D\in \mathbb{N}$ be the fixed  depth of the binary tree. Since the depth and width of the tree is determined in advance, only the parameters for the branching nodes and the solutions in the leaves have to be determined as part of the optimization problem. The $2^{D+1}-1$ many nodes are split up into the set of inner nodes $Q$, where instance parameters are queried to perform the splits, and the set of leaves $K$ that hold solutions to the underlying optimization problem. Let $Q_-\subset Q$ be the set of inner nodes without the root node of the tree, labeled with $0$. Let $l,r:Q\rightarrow  Q\cup K$ be the functions mapping every inner node to its left and right child, respectively. Furthermore,  $p:Q_-\cup K\rightarrow Q$ maps every node (except of the root node $0$) to its predecessor in the tree. We use the vector $\bm{x}_k \in\mathbb{R}^n$ to represent the solution proposed at leaf $k\in K$. The binary variables $z^{j,s}_{q,c}$ indicate for inner node $q\in Q$ and its child $c\in\{l(q),r(q)\}$, whether for sample point $j\in [N]$ the decision path traverses node $q$ towards $c$ in perturbation scenario $s\in [S]$.

We precompute splits that are worth considering. For every $i\in[n]$, let $\Theta(i)$ be a set of possible split points in dimension $i$. Within this paper, we use the midpoints between the sorted entries, i.e.,
let $\bar{c}_{1,i}\le \bar{c}_{2,i} \le \ldots \le \bar{c}_{N,i}$ be the sorted entries in dimension $i\in[n]$, and we use
$\Theta(i)=\{\frac{1}{2}(\bar{c}_{j,i}+\bar{c}_{j+1,i}) \mid j \in [N-1]\}$.
We can then use binary variables $b_{q,i,\theta}$ to indicate whether the cost of item $i\in[n]$ is queried via $c_i \leq \theta$ for $\theta \in \Theta(i)$ at inner node $q\in Q$. Furthermore, the continuous variables $\bar{z}^{j,s}$ contain the objective value resulting from following the decision path for sample $j \in [N]$ in scenario $s\in[S]$ when using the decision at the respective leaf. Finally, the epigraph variable $u$ holds the worst-case overall cost resulting from using the found tree. Problem~\eqref{eq:mpGlob} shows the full master problem.
\begin{subequations}
\begin{align}
    \min \quad &  u \label{eq:mp:of}\\
    \text{s.t.} \quad & \bm{x}_k \in \mathcal{X} & \forall k \in K \label{eq:mpGlob:con2}\\
 &   \sum_{i \in [n]} \sum_{\theta \in \Theta(i)} b_{q,i,\theta} = 1 & \forall q \in Q \label{eq:mpGlob:con1}\\
    & z^{j,s}_{0,l(0)}+z^{j,s}_{0,r(0)} =  1 &  \forall j \in [N],\ s \in [S]  \label{eq:mpGlob:con3}\\
    & z^{j,s}_{p(q),q} = z^{j,s}_{q,l(q)} + z^{j,s}_{q,r(q)} &  \forall j \in [N],\ s \in [S],\ q \in Q_- \label{eq:mpGlob:con4}\\
    &z^{j,s}_{q,l(q)} \leq \sum_{i \in [n]} \sum_{\substack{\theta \in \Theta(i):\\ c_{j,i}+\xi_{j,i}^s \leq \theta}} b_{q,i,\theta} & \quad \forall j \in [N],\ s \in [S],\ q \in Q\label{eq:mpGlob:con5}\\
    &z^{j,s}_{q,r(q)} \leq \sum_{i \in [n]} \sum_{\substack{\theta \in \Theta(i):\\ c_{j,i}+\xi_{j,i}^s > \theta}} b_{q,i,\theta} & \forall j \in [N],\ s \in [S],\ q \in Q\label{eq:mpGlob:con6}\\
    & \bar{z}^{j,s} + M^{j} (1- z^{j,s}_{p(k),k}) \geq \bm{c}_{j}^\top \bm{x}_{k} &  \forall j \in [N],\ s \in [S],\  k \in K \label{eq:mpGlob:con7}\\
    & u \geq \sum_{j \in [N]} \bar{z}^{j,s} & \forall  s \in [S] \label{eq:mpGlob:con8}\\
    & b_{q,i,\theta} \in \{ 0,1 \} \quad & \forall q \in Q,\ i \in [n],\ \theta \in \Theta(i) \label{eq:mp:dom1}\\
    & z^{j,s}_{q,c} \in \{ 0,1 \} \quad & \forall j \in [N],\ s\in[S],\ q \in Q,\ c\in\{l(q),r(q)\}\label{eq:mp:dom2}\\
    & \bar{z}^{j,s} \in \mathbb{R} \quad & \forall j \in [N],\ s\in[S]\label{eq:mp:dom4}\\
        & u \in \mathbb{R} & \label{eq:mp:dom5}
\end{align}
\label{eq:mpGlob}
\end{subequations}

Constraint~\eqref{eq:mpGlob:con2} guarantees that each of the $K$ leaves contains a feasible solution to the underlying optimization problem.
For every inner node, Constraint~\eqref{eq:mpGlob:con1} ensures that exactly one attribute is selected for the split, along with a corresponding threshold value. For each sample point in every scenario, one unit of flow is introduced at the root of the decision tree and is directed towards one of its successor nodes via Constraint~\eqref{eq:mpGlob:con3} and flow conservation at the remaining inner nodes is enforced via Constraint~\eqref{eq:mpGlob:con4}.
Constraints~\eqref{eq:mpGlob:con5} and \eqref{eq:mpGlob:con6} ensure that for each perturbed observation of a data point, the flow adheres to the decision tree's structure based on the selected branching decisions. Hence, the flow indicates which perturbed observation is assigned to which solution. Based on this, the resulting objective for every disturbed observation is calculated via Constraint~\eqref{eq:mpGlob:con7}.
Finally, Constraint~\eqref{eq:mpGlob:con8} completes the epigraph formulation of the objective function by taking into account the worst-case sum of costs. Sufficiently large values for the big $M$ parameters in Constraint~\eqref{eq:mpGlob:con7} can be computed by setting $M^j = \max_{\bm{x} \in \mathcal{X}} \bm{c}_{j}^\top \bm{x}$ for all $j \in [N]$ or any other upper bound for the objective value of the solution for an observation $j \in [N]$.

When implementing the master problem, it is possible to reduce both the number of constraints and variables. Instead of creating $N \cdot S$ copies of each variable to represent flow through the tree, we can aggregate identical disturbed observations of a sample point into a single variable, avoiding $S$ copies for every sample point even when identical disturbances occur in the uncertainty set. Thus, for each sample point $j \in [N]$, we only need as many copies of the flow variables as there are distinct perturbations $\bm{\xi}^s_j$ for $s\in [S]$. This approach requires maintaining a data structure to track which disturbed observations correspond to each scenario $s \in [S]$.

\medskip

After constructing the decision tree defined by the split queries, represented by variables $b_{q,i,\theta}$, and the solutions at the leaves $\bm{x}_k$, the adversarial problem is invoked to determine the disturbance in the observed attributes that leads to the worst-case overall costs when applying the optimization surrogate to the sample points. To formulate the adversary problem, we can exploit the fact that we can precompute the efforts the adversary has to undertake to disturb observations of sample point $j\in [N]$ such that the surrogate assigns it to the solution provided in leaf $k\in K$. In particular, let $\rho_k^j$ represent the minimum sum of the absolute changes in the observed attributes $\bm{c}_j$ that are required to yield the perturbed observation $\hat{\bm{c}}_j$, which would be mapped to leaf $k$ in the decision tree derived from the master problem. In particular, assume item $i\in[n]$ is queried at node $q\in Q$ along the path to node $k$ with threshold $\theta \in \Theta(i)$. If $l(q)$ leads to leaf $k$ then the cost of $\max\{0,c_{i}^j-\theta\}$ is added to $\rho_k^j$. If $r(q)$ leads to leaf $k$ then $\max\{0,\theta-c_{i}^j+\epsilon\}$ is added, for some small $\epsilon>0$. By definition, we have $\rho^k_j=0$ for the leaf $k$ that is reached in observation $j$ by the decision tree without disturbance. Note that these perturbation costs are the same for both types of uncertainty sets. In order to find the worst-case perturbation, we can formulate the adversary problem as a multiple choice knapsack problem \cite{sinha1979multiple}, which is known to be NP-hard \cite{kellerer2004multiple}. In case of the global budgeted uncertainty set, Model~\eqref{eq:apGlob} can be used, where variable $y_k^j$ indicates whether the observation of sample point $j\in[N]$ is disturbed in such a way that it is mapped to leaf $k\in K$.

\begin{subequations}
\begin{align}
    \max & \sum_{j \in [N]} \sum_{k \in K} (\bm{c}_{j}^\top \bm{x}_{k}) y^{j}_{k}& \\
    \text{s.t.} & \sum_{j \in [N]} \sum_{k \in K} \rho^{j}_{k} y^{j}_{k} \leq \Gamma^{\text{glob}} & \label{eq::exchange}\\
    & \sum_{k \in K} y^{j}_{k} = 1 & \forall j \in [N] \\
    & y^{j}_{k} \in \{ 0,1 \} & \forall j \in [N], k \in K
\end{align}
\label{eq:apGlob}
\end{subequations}
For the case of the local budgeted uncertainty, the adversary problem is obtained by exchanging Constraint~\eqref{eq::exchange} with the constraint 
\begin{equation}
\sum_{k \in K} \rho^{j}_{k} y^{j}_{k} \leq \Gamma^\text{loc}\quad \forall j \in [N] .
\end{equation}
In this case, the problem can be decomposed so that for each $j\in[N]$, we solve
\begin{align*}
    \max & \sum_{k \in K} (\bm{c}_{j}^\top \bm{x}_{k}) y^{j}_{k}& \\
    \text{s.t.} & \sum_{k \in K} \rho^{j}_{k} y^{j}_{k} \leq \Gamma^\text{loc} \\
    & \sum_{k \in K} y^{j}_{k} = 1 \\
    & y^{j}_{k} \in \{ 0,1 \} & \forall k \in K.
\end{align*}
Each  subproblem can be solved to optimality in polynomial time by simply selecting $k\in\arg\max_{k\in K}\{ \bm{c}_{j}^\top \bm{x}_{k} : \rho^{j}_{k} \le \Gamma^\text{loc}\}$.

Further note that for both types of uncertainty sets, a worst-case assignment of samples to leaves---given by variables $y_k^j$---can be transformed into a corresponding perturbation $\bm{\xi}_j$ of the observation of sample $j\in[N]$, by backtracking the path from $k$ to the root and setting $\xi_{j,i}$ according to the cost that are incurred in $\rho_k^j$ for this sample.  If the optimal objective value of the adversary problem is equal to the objective value of the master problem, the process terminates, as no scenario exists that can increase the worst-case cost. If, on the other hand, the objective value of the adversary problem exceeds the cost of the master problem, the newly found scenario is added to $\mathcal{U}'$ and we return to solve the upgraded master problem.

To further improve a decision tree $\tree$ found after solving Problem~\eqref{eq:mpGlob}, we propose a post-processing step. For every node $q \in Q$ and the corresponding variable $b_{q,i,\theta}$ for which $b_{q,i,\theta} =1$ in the solution of Problem~\eqref{eq:mpGlob}, let $\bar{c}_{j,i} < \theta < \bar{c}_{j+1,i}$. We then consider the interval $[\bar{c}_{j,i}, \bar{c}_{j+1,i}]$ this threshold is part of and discretize it by $\Theta^\prime(q)=\{\pi \bar{c}_{j,i}+(1-\pi)\bar{c}_{j+1,i} \mid \pi \in \Pi\}$ with discrete $\Pi \subset(0,1]$.
For every $(\theta^\prime_1,\ldots, \theta^\prime_{|Q|}) \in \Theta^\prime(1) \times \ldots \times \Theta^\prime(|Q|)$, i.e., for every constellation of thresholds that are in the same interval as the thresholds of $\tree$, we create a new tree that  uses the new thresholds. These trees are evaluated solving Problem~\eqref{eq:apGlob}. We return the tree with the minimal objective value found. In this paper, we use $\Pi = \{0.1, 0.2, \ldots, 0.9\}$.

\section{Heuristics} \label{sec:heuristics}
We present one baseline heuristic as well as three more sophisticated heuristics for generating robust decision trees. All heuristics share the idea of dividing the problem of finding a tree into two (interdependent) parts: finding a good tree structure (i.e. branching attributes and corresponding thresholds) and determining the solutions associated with its leaves. We are trying to reduce the computation times by treating the variables of one of these as parameters and only optimizing the other subproblem.

\subsection{Baseline Heuristic \OneSol: Single Solution\label{sec::1Sol}}
One obvious way of simplifying Problem~\eqref{eq::TheProblem} is to restrict the allowed structure of the decision tree even further than necessary for comprehensibility. In our baseline approach---referred to as \OneSol---we restrict ourselves to only considering trees with a depth of zero. In particular, we do not need to find any splits as the allowed trees only consist of a single (leaf) node, i.e., every data point is mapped to the same solution $\bm{x}$. Furthermore, no uncertainty has to be taken into account (as there are no splits). The resulting problem of finding the optimal solution can be formulated as
$$\min_{\bm{x}\in \X} \sum_{j \in [N]}  \bm{c}_j^\top \bm{x}.$$

\subsection{Heuristic \HTree: Fix Tree, Optimize Solution}\label{subs:heuristics:fixtree}

The approach of this heuristic is to repeatedly generate a random decision tree and then assign the best possible solutions to the leaves. After the time limit is reached, the best combination of tree structure and solutions is returned. Depending on the type of budget to be considered, the procedures for determining the solutions differ. We refer to this heuristic as \HTreeGlob and \HTreeLoc for the global and local uncertainty set, respectively.

Consider a given tree structure.
In case of a global budget, an iterative approach is needed to find the best solutions $\bm{x}_k$ for leaves $k\in K$. We start out with a subset of perturbations, which again can be assumed to be $\mathcal{U}' = \{(\bm{0},\ldots,\bm{0})\}$. Let $[S]$ be the enumeration of this set and let $s\in[S]$ refer to one specific perturbation. As the tree structure is fixed, for every sample $j\in [N]$, we can precompute the leaf it is mapped to. Hence, the binary parameters $y_k^{j,s}$ indicate whether sample $j \in [N]$ is mapped to leaf $k \in [K]$ given perturbation $s \in [S]$.  By solving the reduced Master Problem~\eqref{eqs:globFixTreeMp} we obtain optimal solutions for the given combination of tree structure and perturbations. \begin{subequations}
\begin{align}
    \min \quad & u \\
    \text{s.t.} \quad & u \geq \sum_{j \in [N]} \sum_{k \in K}
     (\bm{c}_{j}^\top \bm{x}_{k}) y^{j,s}_{k} & \forall s \in [S] \\
    & \bm{x}_{k} \in \mathcal{X} & \forall k \in K\\
    & u \in \mathbb{R} & 
\end{align}
\label{eqs:globFixTreeMp}
\end{subequations}
 In the first iteration, the solutions are optimized on the nominal setting and the resulting decision is not necessarily well-protected against perturbed observations. As in Section~\ref{sec:optApproach} we now need to detect the most harmful perturbation of the observations by solving the Adversary Problem~\eqref{eq:apGlob}.
The found perturbation is added to $\mathcal{U}'$, increasing $S$ by one, and Problem~\eqref{eqs:globFixTreeMp} is solved again. By using this iterative procedure until the objective function values are identical, we can find  solutions that are optimal for the fixed tree structure.

When dealing with a local budget, there is no need for an iterative approach, as the worst-case perturbation of every sample point can be considered independently. In particular, for every sample $j\in[N]$ we can again calculate the adversary cost $\rho_k^j$ of perturbing the observations in such a way that it ends up in leaf $k$ (see Section~\ref{sec:iterative}). Then, we can extract $\bar{K}(j)=\{k\in K : \rho_k^j\leq \Gamma^{\text{loc}} \}$, which is the set of leaves the adversary can send sample $j$ to. Then,  Problem~\eqref{eqs:locFixTree} finds the optimal solutions  to be
assigned to the leaves of the fixed tree.
\begin{subequations}
\begin{align}
    \min \quad & \sum_{j \in [N]} u_j \\
    \text{s.t.} \quad & u_j \geq \bm{c}_{j}^\top \bm{x}_{k} & \forall j \in [N], k \in \bar{K}(j) \\
    & \bm{x}_k \in \mathcal{X} & \forall k \in K \\
    & u_j \in \mathbb{R} & \forall j \in [N]
\end{align}
\label{eqs:locFixTree}
\end{subequations}
The entire procedure is outlined in Algorithm~\ref{alg:fixTreeHeu}, differentiating between the local and the global budgeted uncertainty set with  $\tree^\star$ being the current incumbent best surrogate.
\begin{algorithm}
\begin{algorithmic}[1]
\State $\tree^* \gets$ None, $\textnormal{obj}(\tree^*)\gets \infty$
\While{time limit is not reached\label{line:fixTreeHeu:loop}}
    \ForAll{$q \in Q$}\Comment{Sample tree structure} \label{line:treestartfirstloop}
    \State $b_{q,i,\theta} \gets 0 \ \forall i \in [n]\ \forall \theta \in \Theta(i)$ \label{line:fixTreeHeu:splitsEnd}
        \State $i^\prime \gets$ sample uniformly from $[n]$ \label{line:fixTreeHeu:splitsStart}
        \State $\theta^\prime \gets$ sample uniformly from $\Theta(i^\prime)$
        \State $b_{q,i^\prime,\theta^\prime} \gets 1$
    \EndFor
    \State for all $j\in[N]$, $k\in K$: compute $\rho_k^j$ \label{line:treeendfirstloop}
    \If{$\Uloc$}\Comment{Determine solutions $\bm{x}_k$}
    \State for all $j\in[N]$: compute $\bar{K}(j)$ \label{line:treestartloc}
        \State solve Problem~\eqref{eqs:locFixTree}; extract surrogate $\tree$ and objective value $\textnormal{obj}^{\textnormal{mas}}$ \label{line:treeendloc}
    \ElsIf{$\Uglob$}
        \State $\mathcal{U}' \gets \{(\bm{0},\ldots,\bm{0})\}$; $S\gets 1$ \label{line:treestartglob}
        \State converged $\gets$ False
        \While{\textbf{not} converged}
        \State
        for all $k\in K$, $j\in[N]$, $s\in S$: compute $y_k^{j,s}$ 
        \State solve Master Problem~\eqref{eqs:globFixTreeMp}; extract surrogate $\tree$ and objective value $\textnormal{obj}^{\textnormal{mas}}$
         \State solve Adversary Problem~\eqref{eq:apGlob}; extract perturbation $\bm\xi$ and objective $\textnormal{obj}^{\textnormal{adv}}$
            \If{$\textnormal{obj}^{\textnormal{adv}} = \textnormal{obj}^{\textnormal{mas}}$}
                \State converged $\gets$ True
                \Else
                            \State $\mathcal{U}' \gets \mathcal{U}' \cup \{\bm\xi\}$; $S\gets S+1$ \label{line:treeendglob}
            \EndIf
        \EndWhile
    \EndIf
    \If{$\textnormal{obj}^{\textnormal{mas}} < \textnormal{obj}(\tree^*)$}
    \State $\textnormal{obj}(\tree^*)\gets\textnormal{obj}^{\textnormal{mas}}$
        \State $\tree^* \gets \tree$
    \EndIf
\EndWhile
\State \Return $\tree^*$
\end{algorithmic}
\caption{\label{alg:fixTreeHeu} \HTree heuristic}
\end{algorithm}

Within the outer loop that runs until a time limit is met, we first sample a random tree structure in Lines~\ref{line:treestartfirstloop}--\ref{line:treeendfirstloop}. We then determine solutions $\bm{x}_k$ for each leaf $k\in K$, using either Lines~\ref{line:treestartloc}--\ref{line:treeendloc} in case of local budgeted uncertainty, or Lines~\ref{line:treestartglob}--\ref{line:treeendglob} in case of global budgeted uncertainty, which requires an inner loop. Finally, we keep track of the best solution found in the algorithm and return this best solution at the end.

\subsection{Heuristic \HSol: Fix Solutions, Optimize Tree}\label{subs:heuristics:fixsol}
The second heuristic samples a set of $K$ solutions and randomly assigns each of them to one of the leaves. The remaining task is to find splits of the tree, that perform best if these solutions are fixed at the leaves. This procedure again is performed repeatedly and after the time limit is reached, the best combination of solutions and tree structure is returned. To find the best tree for one fixed set of solutions, we reuse the iterative approach presented in Section~\ref{sec:iterative}, where we now can fix the values $\bm{x}_k$, $k\in K$, according to the sampled solutions. In this case, we need to apply the iterative approach both in the cases of global and local budgeted uncertainty set, obtaining heuristics \HSolGlob and \HSolLoc, respectively.

\begin{algorithm}
\begin{algorithmic}[1]
\State $\tree^* \gets$ None, $\textnormal{obj}(\tree^*) \gets \infty$, $X^{cand} \gets \emptyset$
\ForAll{$j \in [N]$} \Comment{Determine solution pool}\label{line:1}
    \State $X^{cand} \gets X^{cand} \cup \{\arg \min_{\bm{x} \in \mathcal{X}} \bm{c}_j^\top \bm{x}\}$ \label{line:2}
\EndFor
\While{time limit is not reached \label{line:fixSolsHeu:loop}}
\ForAll{$k\in K$} \Comment{Sample random solutions} \label{line:3}
    \State $\bm{x}_k \gets$ sample uniformly from $X^{cand}$
    \label{line:fixSolsHeu:fix} \label{line:4}
\EndFor
\State  $\mathcal{U}' \gets \{(\bm{0},\ldots,\bm{0})\}$; $S\gets 1$ \label{line:5}
    \State converged $\gets$ False \Comment{Determine tree structure}
    \While{\textbf{not} converged}
        \State solve Master Problem~\eqref{eq:mpGlob} with $\bm{x}_k$ fixed
        \State extract surrogate $\tree$ and objective value $\textnormal{obj}^{\textnormal{mas}}$
         \State solve Adversary Problem~\eqref{eq:apGlob}; extract perturbation $\bm\xi$ and objective $\textnormal{obj}^{\textnormal{adv}}$
         
          \If{$\textnormal{obj}^{\textnormal{adv}} = \textnormal{obj}^{\textnormal{mas}}$}
                \State converged $\gets$ True
                \Else
                            \State $\mathcal{U}' \gets \mathcal{U}' \cup \{\bm\xi\}$; $S\gets S+1$
            \EndIf
        \EndWhile \label{line:6}
    \If{$\textnormal{obj}^{\textnormal{mas}} < \textnormal{obj}(\tree^*)$}
    \State $\textnormal{obj}(\tree^*)\gets\textnormal{obj}^{\textnormal{mas}}$
        \State $\tree^* \gets \tree$
    \EndIf
\EndWhile
\State \Return $\tree^*$

\end{algorithmic}
\caption{\label{alg:fixSolsHeu} \HSol heuristic.}
\end{algorithm}

We first generate a pool of solution candidates in Lines~\ref{line:1}--\ref{line:2}. In this case, we simply choose solutions that optimize with respect to one of the historical cost vectors; in principle, other methods to generate suitable candidate solutions can be applied here as well. We then repeat the remaining process until the time limit is reached. We first sample a random subset of solutions in Lines~\ref{line:3}--\ref{line:4}, and then optimize for the tree structure that takes uncertainty into account using the scenario generation loop in Lines~\ref{line:5}--\ref{line:6}. The last lines of the algorithm are used to keep track of the best solution found.

\subsection{Heuristic \HAlt: Alternate \HTree and \HSol}\label{subs:heuristics:alternating}

Our final heuristic approach is based on the idea of merging heuristics \HTree and \HSol in an alternating fashion. We first call \HTree once, i.e.\ only one iteration of the loop in Line~\ref{line:fixTreeHeu:loop} of Algorithm~\ref{alg:fixTreeHeu} is performed. This provides us with a random tree structure and the respective optimized solutions at its leaves.

We now take these optimized solutions and use them in \HSol as if they were randomly sampled. In particular, we fix these solutions and optimize the splits given the fixed solutions, i.e., we perform a single iteration of the loop starting in Line~\ref{line:fixSolsHeu:loop} of Algorithm~\ref{alg:fixSolsHeu} where we do not sample random solutions $\bm{x}_k$, but instead fix them to the solutions found in \HTree.

Now, we extract the splits from the newly found surrogate from \HSol and call \HTree once again, where we replace Lines~\ref{line:treestartfirstloop}--\ref{line:treeendfirstloop} of Algorithm~\ref{alg:fixTreeHeu} such that the splits are set according to these extracted splits. Again, the obtained solutions that are optimal for this tree are used in \HSol, and this process is repeated until both methods generate the same objective value.

Similar to the other two heuristics, this is performed several times, until a time limit is reached, after which the best encountered surrogate is returned. Figure~\ref{fig:enter-label} represents the nested structure of one run of this heuristic, referred to as \HAltGlob for the case of a global budgeted uncertainty set. Recall that for the case of a local budgeted uncertainty set, \HAltLoc does not have to perform an iterative approach in order to conduct the \HTreeLoc sub routine.
\begin{figure}[h!]
    \centering
        \begin{tikzpicture}[node/.style={circle,draw,font=\sffamily\Large\bfseries}]
            \node[draw, shape=rectangle, rounded corners, line width=0.3mm] at (-3,0) (1) {master problem};
            \node[above = -.02cm of 1,xshift=-2.2cm]  (HTree) {\HTreeGlob};
            \node[draw, shape=rectangle, rounded corners, line width=0.3mm] at (-3,-2) (2) {adversary problem};
            \node[draw, shape=rectangle, rounded corners, line width=0.3mm] at (3,0) (3) {master problem};
   \node[above = -.02cm of 3,xshift=2.2cm]  (HTree) {\HSolGlob};
            \node[draw, shape=rectangle, rounded corners, line width=0.3mm] at (3,-2) (4) {adversary problem};
            \node[draw, dashed, shape=rectangle, rounded corners, line width=0.3mm, minimum height=4cm, minimum width=5.5cm] at (-3,-1) (5) {};
            \node[draw, dashed, shape=rectangle, rounded corners, line width=0.3mm, minimum height=4cm, minimum width=5.5cm] at (3,-1) (6) {};
            \path[every node/.style={font=\sffamily\small}]
                    (1.360) edge[->, bend left=45,text width=2.4cm, align=center, line width=0.3mm] node [right, rotate=90, anchor=north] {solutions} (2.360)
                    (2.180) edge[->, bend left=45,text width=2.4cm, align=center, line width=0.3mm] node [left, rotate=90, anchor=south] {perturbation} (1.180)
                    (3.360) edge[->, bend left=45,text width=2.4cm, align=center, line width=0.3mm] node [right, rotate=90, anchor=north] {tree} (4.360)
                    (4.180) edge[->, bend left=45,text width=2.4cm, align=center, line width=0.3mm] node [left, rotate=90, anchor=south] {perturbation} (3.180)
                    (5.90) edge[->, bend left=25,text width=2.4cm, line width=0.3mm, align=center] node [above, anchor = south] {solutions} (6.90)
                    (6.270) edge[->, bend left=25,text width=2.4cm, line width=0.3mm, align=center] node [below, anchor = north] {tree} (5.270);
        \end{tikzpicture}
    \caption{Scheme of the alternating heuristic for a global budget (\HAltGlob).}
    \label{fig:enter-label}
\end{figure}
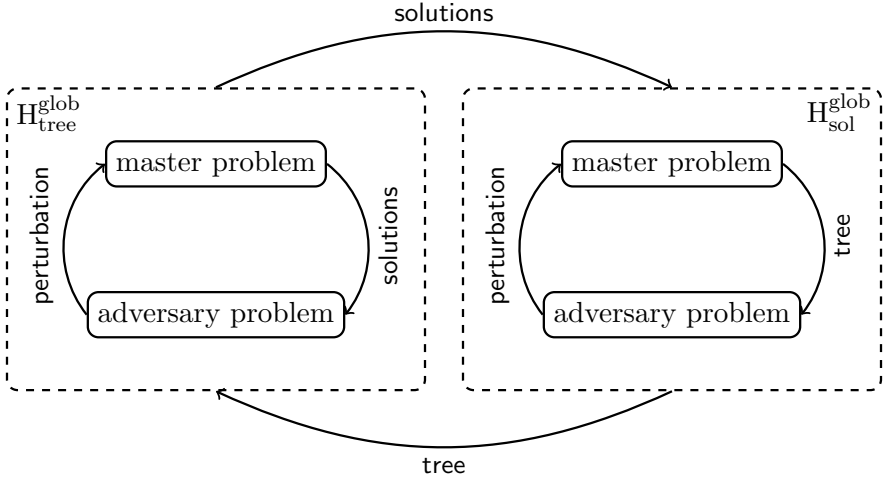

\section{Complexity}\label{sec:complexity}

In the previous section, we introduced the heuristic \OneSol, which simply maps every cost vector to the same, single solution. As the next theorem shows, this approach is even optimal if the uncertainty budget is sufficiently large, which means that the adversarial becomes powerful enough to map any cost vector to any leaf of a decision tree.

\begin{theorem}
Let $\Trees$ be the set of  univariate trees of depth $D\in\mathbb{N}$ as described in Section~\ref{subs:optApproach:uncertainty}. Let $M=\max_{i\in[n]}\left\lbrace \max_{j\in[N]} c_{j,i}-\min_{j\in[N]}c_{j,i}\right\rbrace$.
For a global budgeted uncertainty set with $\Gamma^{\text{glob}}> D\cdot N \cdot M$ and for a local budgeted uncertainty set with $\Gamma^{\text{loc}}> D\cdot M$, the solution obtained by \OneSol is optimal for Problem~\eqref{eq::TheProblem}.

\label{thm::OneSol}
\end{theorem}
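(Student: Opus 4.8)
The key claim is that when the adversary's budget is large enough, it can steer \emph{any} sample to \emph{any} leaf of \emph{any} tree in $\Trees$, so no tree can do better than the best single solution. I would make this precise as follows. Fix an arbitrary tree $\tree\in\Trees$ and an arbitrary assignment of samples to leaves, say sample $j$ to leaf $k(j)$. The plan is to show that under the stated budget bounds, the perturbation $\bm{\xi}_j$ required to send the observation $\pc_j$ to leaf $k(j)$ is always feasible for the uncertainty set, using the precomputed perturbation costs $\rho^j_k$ from Section~\ref{sec:iterative}.

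**Main step (bounding $\rho^j_k$).** The path from the root to leaf $k$ has at most $D$ inner nodes. At each such node $q$, item $i$ is queried with threshold $\theta\in\Theta(i)$, and by construction $\theta\in[\min_{j'}c_{j',i},\max_{j'}c_{j',i}]$. The cost added to $\rho^j_k$ at that node is either $\max\{0,c_{j,i}-\theta\}$ or $\max\{0,\theta-c_{j,i}+\epsilon\}$; in both cases this is at most $\max_{j'}c_{j',i}-\min_{j'}c_{j',i}+\epsilon \le M+\epsilon$ (ignoring the harmless $\epsilon$, or absorbing it by taking the budget bounds as strict inequalities). Summing over the at most $D$ nodes on the path gives $\rho^j_k \le D\cdot M$ for every $j\in[N]$ and every leaf $k\in K$. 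I would state this as the central lemma-style observation inside the proof.

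**Concluding for each uncertainty set.** For the local budgeted set, feasibility of sending $j$ to $k(j)$ just requires $\rho^j_{k(j)}\le\Gamma^{\text{loc}}$, which holds since $\Gamma^{\text{loc}}>D\cdot M\ge\rho^j_{k(j)}$; hence the adversary can realize \emph{any} leaf assignment. For the global budgeted set, the total cost of realizing the assignment is $\sum_{j\in[N]}\rho^j_{k(j)}\le N\cdot D\cdot M<\Gamma^{\text{glob}}$, so again any assignment is feasible. Therefore, for any $\tree\in\Trees$, the inner maximization in~\eqref{eq::TheProblem} is at least $\sum_{j\in[N]}\max_{k\in K}\pc_j^\top\bm{x}_k$, and in particular the adversary can choose, for each $j$, the leaf whose solution is worst for $\pc_j$. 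Consequently the objective of any tree is at least $\sum_{j\in[N]}\max_{\bm{x}\in\{\bm{x}_k:k\in K\}}\pc_j^\top\bm{x} \ge \max_{\bm x\in\X}\sum_{j\in[N]}\pc_j^\top\bm x$? — here I need to be careful about the direction: the adversary \emph{maximizes}, so a tree with leaf solutions $\{\bm{x}_k\}$ attains objective $\ge \sum_j \max_k \pc_j^\top\bm{x}_k \ge \sum_j \pc_j^\top\bm{x}_{k_0}$ for any fixed leaf $k_0$, and conversely the single-solution heuristic with optimal $\bm x^\star$ attains exactly $\sum_j\pc_j^\top\bm x^\star=\min_{\bm x\in\X}\sum_j\pc_j^\top\bm x$. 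So I must argue the matching \emph{upper} bound: a depth-$D$ tree that places the \OneSol-optimal solution $\bm x^\star$ at \emph{every} leaf has objective exactly $\sum_j\pc_j^\top\bm x^\star$ regardless of the perturbation (all leaves are identical), hence $\min_{\tree\in\Trees}(\cdots)\le\sum_j\pc_j^\top\bm x^\star$; combined with the lower bound $\min_{\tree}(\cdots)\ge\sum_j\pc_j^\top\bm x^\star$ (since for \emph{any} tree the adversary can do at least as badly as the best achievable, and the best over all leaf-solution choices of $\min_j$... ) the two meet. Let me restate the lower bound cleanly: for any tree with leaves holding solutions $\bm x_{k}$, the adversary can force objective $\sum_j \max_k \pc_j^\top \bm x_k \ge \sum_j \min_{\bm x\in\X}$? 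No — $\max_k \pc_j^\top\bm x_k\ge \pc_j^\top\bm x_{k}$ for each individual $k$, but I want a lower bound valid for \emph{all} choices of the $\bm x_k$; the right statement is that $\min$ over trees of $\sum_j\max_k\pc_j^\top\bm x_k$ is minimized by making all $\bm x_k$ equal to a common $\bm x$, giving $\min_{\bm x\in\X}\sum_j\pc_j^\top\bm x$. I would prove this last claim by a short exchange argument: given any leaf solutions, replacing all of them by the single solution $\bm x^\star=\arg\min_{\bm x\in\X}\sum_j\pc_j^\top\bm x$ does not increase $\sum_j\max_k\pc_j^\top\bm x_k$, since $\sum_j\max_k\pc_j^\top\bm x_k\ge\max_k\sum_j\pc_j^\top\bm x_k\ge\sum_j\pc_j^\top\bm x^\star$ is false in general — instead use that each term $\max_k\pc_j^\top\bm x_k\ge\pc_j^\top\bm x^\star$ need not hold, so the cleanest route is: the value of Problem~\eqref{eq::TheProblem} restricted to trees equals $\min_{(\bm x_k)_{k\in K}}\sum_j\max_{k}\pc_j^\top\bm x_k$, and since $K$ may contain repeated solutions, choosing all $\bm x_k\equiv\bm x$ shows this is $\le\min_{\bm x}\sum_j\pc_j^\top\bm x$; and it is $\ge$ because for any $(\bm x_k)$ and the choice $k^\star=\arg\max_{k}\sum_j\pc_j^\top\bm x_k$ we get $\sum_j\max_k\pc_j^\top\bm x_k\ge\sum_j\pc_j^\top\bm x_{k^\star}=\max_k\sum_j\pc_j^\top\bm x_k\ge\min_{\bm x}\sum_j\pc_j^\top\bm x$.

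**Main obstacle.** The bound $\rho^j_k\le D\cdot M$ is routine; the genuinely delicate point is the last paragraph's equality $\min_{\tree\in\Trees}\max_{\mathcal U}(\cdots)=\min_{\bm x\in\X}\sum_j\pc_j^\top\bm x$, i.e.\ matching the adversarial lower bound for arbitrary leaf-solution vectors against the single-solution upper bound. I would handle it via the two-inequality exchange argument sketched above (the $\le$ direction by using a constant tree, the $\ge$ direction by letting the adversary route every sample to the leaf $k^\star$ maximizing $\sum_j\pc_j^\top\bm x_k$), and I would double-check that the strict inequalities on $\Gamma$ genuinely absorb the $\epsilon$ terms appearing in the definition of $\rho^j_k$ for right-branches.
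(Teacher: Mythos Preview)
Your proposal is correct and follows essentially the same route as the paper: bound $\rho^j_k\le D\cdot M$ using that thresholds lie in the data range, conclude that under the stated budgets the adversary can route any sample to any leaf, and then reduce the robust objective to $\min_{(\bm x_k)}\sum_j\max_k\pc_j^\top\bm x_k$, which is bounded below by $\min_{\bm x\in\X}\sum_j\pc_j^\top\bm x$ via $\sum_j\max_k\ge\max_k\sum_j$. Your clean-up in the final paragraph (using $k^\star=\arg\max_k\sum_j\pc_j^\top\bm x_k$) is exactly the paper's chain of inequalities, and your explicit remark that the upper bound is witnessed by a depth-$D$ tree with $\bm x^\star$ at every leaf makes rigorous what the paper phrases as ``\OneSol yields a feasible surrogate''; the paper is otherwise silent on the $\epsilon$ issue you flag, so your strict-inequality absorption argument is a welcome addition rather than a deviation.
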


\begin{proof}
For a fixed adversary decision on the perturbation of observations, any sample that is fed to the decision tree has to pass $D$ queries. Hence, any adversary decision results in the perturbation of at most $D\cdot N$ observations. Regardless of the threshold for the splits, it never makes sense to perturb the observations further than the range of the queried item in the data. Consequently, for any split, a deviation of at most $M=\max_{i\in[n]}\left\lbrace \max_{j\in[N]} c_{j,i}-\min_{j\in[N]}c_{j,i}\right\rbrace$ of the observed value is implemented by the adversary.
Let us consider the cases $\Gamma^{\text{glob}}>D\cdot N\cdot M$ and $\Gamma^{\text{loc}}>D\cdot M$. In this case, as argued above, the adversary is capable of perturbing the observations in such a way, that any sample can end up in any of the leaves. Hence, the selection of splits no longer affects the result; only the solutions chosen at the leaves matter. Then
\begin{align*}
\min_{\tree \in \Trees} \max_{(\bm{\xi}_1,\ldots, \bm{\xi}_N) \in \mathcal{U}} \sum_{j \in [N]} \bm{c}_j^\top \tree(\bm{c}_j+\bm{\xi}_j) &=  \min_{\bm{x}_1,\ldots,\bm{x}_{|K|}\in \X} \sum_{j \in [N]} \max_{k \in K} \bm{c}_j^\top \bm{x}_k\\
&\geq  \min_{\bm{x}_1,\ldots,\bm{x}_{|K|}\in \X} \max_{k \in K} \sum_{j \in [N]}  \bm{c}_j^\top \bm{x}_k\\
&=  \min_{\bm{x}\in \X} \sum_{j \in [N]}  \bm{c}_j^\top \bm{x}
\end{align*}
and therefore, \OneSol, which surely yields a feasible surrogate, is optimal for these cases.

\end{proof}

As a consequence, the complexity of Problem~\eqref{eq::TheProblem} becomes the same as minimizing a linear function over $\X$.

\begin{cor}
In a situation as outlined in Theorem~\ref{thm::OneSol}, Problem~\eqref{eq::TheProblem} has the same complexity as the optimization problem specified by the underlying domain $\mathcal{X}$.
\end{cor}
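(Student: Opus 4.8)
The plan is to extract the proof of the corollary directly from the equivalence already established in Theorem~\ref{thm::OneSol}. That theorem shows that, under the stated budget conditions, the optimal value of Problem~\eqref{eq::TheProblem} equals the optimal value of $\min_{\bm{x}\in\X}\sum_{j\in[N]}\bm{c}_j^\top\bm{x}$. Since the $\bm{c}_j$ are fixed data, the vector $\bm{c}:=\sum_{j\in[N]}\bm{c}_j$ is a fixed cost vector, and so the right-hand side is precisely an instance of $\min_{\bm{x}\in\X}\bm{c}^\top\bm{x}$, i.e.\ the optimization problem specified by the domain $\X$. Thus I would argue the two directions of a polynomial-time reduction.

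First I would show that Problem~\eqref{eq::TheProblem} reduces to the underlying optimization problem: given an instance of \eqref{eq::TheProblem} meeting the hypotheses of Theorem~\ref{thm::OneSol}, compute $\bm{c}=\sum_{j}\bm{c}_j$ in polynomial time, call an oracle for $\min_{\bm{x}\in\X}\bm{c}^\top\bm{x}$, and return that solution placed at every leaf (a depth-zero tree, or equivalently the constant tree); by Theorem~\ref{thm::OneSol} this is optimal for \eqref{eq::TheProblem}. Conversely, any instance $\min_{\bm{x}\in\X}\bm{c}^\top\bm{x}$ of the underlying problem is the special case $N=1$, $\bm{c}_1=\bm{c}$ of \eqref{eq::TheProblem} (with any budget large enough to satisfy the theorem's condition, e.g.\ $\Gamma^{\text{loc}}>D\cdot M$), so solving \eqref{eq::TheProblem} solves it. Both transformations are clearly polynomial, which gives the claimed equivalence of complexity.

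I do not anticipate a genuine obstacle here — the corollary is essentially a restatement of the displayed chain of equalities in the proof of Theorem~\ref{thm::OneSol}. The only point requiring a little care is phrasing: ``same complexity'' should be understood as mutual polynomial-time reducibility (or equivalence of the associated decision problems), and I would state this explicitly so that the claim is unambiguous regardless of whether $\X$ describes a polynomially solvable problem (e.g.\ shortest path, as in the motivating example) or an NP-hard one. A secondary detail is to note that in the reduction from \eqref{eq::TheProblem} one must also verify the budget hypothesis is part of the instance description, so that $M$ and the threshold $D\cdot N\cdot M$ (resp.\ $D\cdot M$) can be checked in polynomial time; since $M$ is a max over the given data this is immediate.
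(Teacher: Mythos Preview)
Your proposal is correct and matches the paper's approach; in fact, the paper states this corollary without proof, treating it as an immediate consequence of Theorem~\ref{thm::OneSol}. Your explicit two-way polynomial reduction (aggregating the cost vectors in one direction, taking $N=1$ in the other) spells out exactly the argument the paper leaves implicit, and your remarks on interpreting ``same complexity'' and on the polynomial checkability of the budget hypothesis are appropriate clarifications.
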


As a second consequence, any other heuristic that optimizes the solutions of the leaves of a decision tree, will also be optimal for sufficiently large budget.

\begin{cor}
In a situation as outlined in Theorem~\ref{thm::OneSol}, heuristics \HTree and \HAlt obtain an optimal solution to Problem~\eqref{eq::TheProblem}.
\label{cor::HTreeHAlt}
\end{cor}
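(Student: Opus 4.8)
The plan is to observe that Corollary~\ref{cor::HTreeHAlt} is essentially an immediate consequence of Theorem~\ref{thm::OneSol} together with the structure of heuristics \HTree and \HAlt. The key point established in the proof of Theorem~\ref{thm::OneSol} is that, once the budget exceeds the stated thresholds, the choice of splits in the tree becomes irrelevant: the adversary can route any sample to any leaf, so the worst-case value of any tree $\tree\in\Trees$ equals $\min_{\bm{x}_1,\ldots,\bm{x}_{|K|}\in\X}\sum_{j\in[N]}\max_{k\in K}\bm{c}_j^\top\bm{x}_k$, and this in turn equals $\min_{\bm{x}\in\X}\sum_{j\in[N]}\bm{c}_j^\top\bm{x}$, the objective value attained by \OneSol. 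So it suffices to argue that \HTree and \HAlt, upon termination, produce a surrogate whose worst-case objective equals this common optimal value.

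First I would treat \HTree. On any iteration of its outer loop, the heuristic fixes an (arbitrary, randomly sampled) tree structure and then solves for the leaf solutions $\bm{x}_k$ to optimality with respect to the fixed structure under the appropriate uncertainty set — via Problem~\eqref{eqs:locFixTree} in the local case, or via the scenario-generation loop over Problem~\eqref{eqs:globFixTreeMp} and Problem~\eqref{eq:apGlob} in the global case, which converges to the true worst-case value for that fixed tree. Under the hypotheses of Theorem~\ref{thm::OneSol}, the worst-case evaluation of \emph{any} fixed tree with optimally chosen leaves is exactly $\min_{\bm{x}_1,\ldots,\bm{x}_{|K|}\in\X}\sum_{j\in[N]}\max_{k\in K}\bm{c}_j^\top\bm{x}_k = \min_{\bm{x}\in\X}\sum_{j\in[N]}\bm{c}_j^\top\bm{x}$, independently of which random structure was sampled. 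Hence already after the first outer iteration, $\textnormal{obj}(\tree^*)$ equals the optimal value of Problem~\eqref{eq::TheProblem}, and by Theorem~\ref{thm::OneSol} this is the global optimum; so the tree returned by \HTree is optimal.

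For \HAlt, I would note that one run of the heuristic begins with a single call to \HTree, which by the argument just given yields a tree structure together with leaf solutions that are optimal for Problem~\eqref{eq::TheProblem}. The subsequent alternation with \HSol can only keep the incumbent worst-case objective at or below this value; since it cannot go below the global optimum, it stays equal to it, and every intermediate surrogate produced is likewise optimal. Thus \HAlt also returns an optimal surrogate. I do not expect a serious obstacle here: the only care needed is to confirm that the leaf-optimization subroutines invoked by \HTree (and hence by \HAlt) genuinely return the worst-case-optimal leaf assignment for a fixed tree — which is exactly what their convergence criterion $\textnormal{obj}^{\textnormal{adv}}=\textnormal{obj}^{\textnormal{mas}}$ guarantees — so that the equality chain of Theorem~\ref{thm::OneSol} applies verbatim with the role of $\Trees$ restricted to the single sampled structure.
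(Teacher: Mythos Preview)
Your proposal is correct and aligns with the paper's reasoning; in fact, the paper states Corollary~\ref{cor::HTreeHAlt} without any explicit proof, treating it as an immediate consequence of Theorem~\ref{thm::OneSol}. Your write-up simply spells out what the paper leaves implicit: once the budget is large enough that the adversary can route every sample to every leaf, the leaf-optimization step in \HTree (and hence the initial \HTree call within \HAlt) solves exactly $\min_{\bm{x}_1,\ldots,\bm{x}_{|K|}\in\X}\sum_{j\in[N]}\max_{k\in K}\bm{c}_j^\top\bm{x}_k$, which Theorem~\ref{thm::OneSol} already shows equals the \OneSol value and hence the optimum of Problem~\eqref{eq::TheProblem}.
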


We also show that our proposed heuristic of fixing the splits of a decision tree and then optimizing over the solutions assigned to the leaves needs to solve an NP-hard problem for both types of uncertainty. To this end, we define the problem (\LSAP) of selecting the best solutions for a decision tree under uncertainty.
\begin{definition}{The Leaf Solution Assignment Problem (\LSAP)}~\\
Given: $N$ historical samples of cost vectors
$\bm{c}_1,\ldots,\bm{c}_N$,
a set of feasible solutions $\mathcal{X}\subseteq \mathbb{R}^n$, a univariate tree of depth $D$ with fixed splits and thresholds and with $K=2^D$ leaves, and an uncertainty set that perturbs observations  of the samples.

Task: find a set of solutions associated to each leaf $\bm{x}_1,\ldots,\bm{x}_{|K|}$ that minimizes the average cost with respect to the worst-case mapping of samples to leaves, resulting from perturbations of the observations by the  adversary.
\end{definition}
In contrast to Problem~\eqref{eq::TheProblem}, the splits of the tree are fixed and only the  solutions need to be selected. This problem definition covers both the local and global budgeted uncertainty set.

We show that this problem is NP-hard already for the selection problem, where $\X=\{\bm{x}\in\{0,1\}^n : \sum_{i\in[n]} x_i = p\}$ for some given integer $p$. To that end, we construct a reduction from the partition problem, which is known to be NP-complete \cite{garey1979computers}.
\begin{theorem}
The Problem~(\LSAP) is NP-hard for the selection problem, even if $\Gamma=1$.
\end{theorem}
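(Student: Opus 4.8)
The plan is to reduce from \textsc{Partition}: given positive integers $a_1,\ldots,a_m$ with $\sum_{i\in[m]} a_i = 2A$, decide whether some subset sums to $A$. I will build an \LSAP instance with the selection domain $\X=\{\bm{x}\in\{0,1\}^n:\sum_{i\in[n]}x_i=p\}$, a tree of very small depth (depth $D=1$, so $K=2$ leaves), a single split on some coordinate, and $\Gamma=1$, whose optimal worst-case objective falls below a target threshold if and only if the \textsc{Partition} instance is a yes-instance.

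The main design idea: with $D=1$ and two leaves, we must pick two solutions $\bm{x}_1,\bm{x}_2\in\X$. With $\Gamma=1$, the adversary can, for each sample $j$, either leave it at the leaf it naturally reaches or (if the required perturbation cost $\rho^j_k$ is at most $1$) flip it to the other leaf. I would set up the split threshold and the sample cost vectors so that for every sample the cost of flipping is exactly $1$ (hence the adversary can always send any single sample to whichever of the two leaves is worse for it), and so that the nominal costs $\bm{c}_j^\top\bm{x}$ encode the partition numbers. Concretely, I expect to use $N=m$ samples (one per partition element), set $n$ and $p$ so that the relevant coordinates correspond to the items $a_i$, and choose $\bm{c}_j$ so that $\bm{c}_j^\top\bm{x}_1$ and $\bm{c}_j^\top\bm{x}_2$ are controlled by how the two chosen solutions ``cover'' element $j$. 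Because the adversary takes the max over the two leaves per sample, the objective becomes $\sum_j \max\{\bm{c}_j^\top\bm{x}_1,\bm{c}_j^\top\bm{x}_2\}$, and minimizing this over the two solution choices is exactly a balancing/partition condition: the optimum is minimized when the two solutions split the "weight" of the $a_i$'s as evenly as possible, which is achievable at the target value precisely when a balanced partition exists.

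The steps, in order: (1) fix the gadget — choose $D=1$, specify the coordinate on which the (fixed) split is made, the threshold $\theta$, and the sample vectors $\bm{c}_1,\ldots,\bm{c}_N$ so that each $\rho^j_k=1$; (2) compute, for an arbitrary pair of leaf solutions, the worst-case objective under the adversary, showing it equals $\sum_{j}\max\{\bm{c}_j^\top\bm{x}_1,\bm{c}_j^\top\bm{x}_2\}$; (3) show this equals a fixed constant plus a term of the form $\tfrac12\sum_i a_i + \tfrac12|\!\sum_{i\in S}a_i - \sum_{i\notin S}a_i|$ (or an analogous expression) where $S$ indexes how the solutions differ, so that it attains its minimum exactly when $\sum_{i\in S}a_i=A$; (4) conclude that \LSAP with this threshold value is solvable iff the \textsc{Partition} instance is a yes-instance, and that the reduction is polynomial; (5) remark that the construction works verbatim for both the local and the global budgeted uncertainty set, since with a single split and $\Gamma=1$ the two notions of budget coincide on this instance.

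The hard part will be step (1)–(3): engineering the sample vectors and the two leaf solutions so that (a) every flip costs exactly $1$ regardless of which solutions are chosen (this constrains the samples' values on the split coordinate relative to $\theta$, independent of the $\bm{x}_k$), and (b) the resulting $\sum_j\max\{\bm{c}_j^\top\bm{x}_1,\bm{c}_j^\top\bm{x}_2\}$ genuinely encodes the partition balance rather than degenerating. A natural way to make (b) work is to have each sample $j$ "ask" which of the two solutions includes item $j$: if exactly one of $\bm{x}_1,\bm{x}_2$ selects coordinate $j$, the max picks up a penalty proportional to $a_j$; if both or neither do, it does not. Then choosing $p$ and $n$ so that the solutions are forced to each select exactly half of the special coordinates turns the total penalty into $\sum_{j\in\triangle}a_j$ where $\triangle$ is the symmetric difference, and minimizing over valid solution pairs becomes exactly \textsc{Partition}. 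I anticipate some care is needed to pad the instance with dummy coordinates so that the cardinality constraint $\sum x_i = p$ does not over- or under-constrain the choice, and to verify feasibility ($\X\neq\emptyset$ and both leaves reachable) throughout.
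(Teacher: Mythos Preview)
Your proposal has a structural gap. You plan to make \emph{every} sample flippable to either leaf at perturbation cost exactly~$1$. Under the local budget $\Gamma^{\text{loc}}=1$ this indeed gives the objective $\sum_{j}\max\{\bm{c}_j^\top\bm{x}_1,\bm{c}_j^\top\bm{x}_2\}$, but then setting $\bm{x}_1=\bm{x}_2=\arg\min_{\bm{x}\in\X}\sum_j\bm{c}_j^\top\bm{x}$ is always optimal: for any pair, $\sum_j\max\{\bm{c}_j^\top\bm{x}_1,\bm{c}_j^\top\bm{x}_2\}\ge\sum_j\bm{c}_j^\top\bm{x}_1$, and the right-hand side is minimized by the single best selection. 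This is exactly the collapse described in Theorem~\ref{thm::OneSol}: once the adversary can route every sample to every leaf, \OneSol is optimal, and for the selection problem \OneSol is polynomial. No choice of cost vectors can make such an instance encode \textsc{Partition}; in particular your step~(3) cannot go through, because nothing prevents $\bm{x}_1=\bm{x}_2$. Your step~(5) is also incorrect: with $N=m>1$ samples each at distance~$1$ from the threshold, a \emph{global} budget of~$1$ lets the adversary flip at most one sample in total, so the two uncertainty sets do not coincide on your instance.

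The paper sidesteps this by using only $N=3$ samples. Two of them sit far from the threshold (hence are pinned to distinct leaves even with budget~$1$) and carry big-$M$ entries that force $\bm{x}_1$ and $\bm{x}_2$ to have disjoint, prescribed supports---this is what rules out $\bm{x}_1=\bm{x}_2$. Only the third sample lies at the threshold and can be routed either way, and the partition balance then appears as a single term $\max\{X(I),\,W-X(I)\}$ coming from that one flippable sample. If you want to salvage your approach you need an analogous pinning mechanism that breaks the $\bm{x}_1=\bm{x}_2$ symmetry while keeping $\Gamma=1$; with all samples flippable this is impossible.
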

\begin{proof}
We start out with the partition problem with $n$ items, $n\in\mathbb{N}$ even, and a positive integer weight $w_i$ associated to every item $i \in [n]$. Let $W=\sum_{i \in [n]}w_i\geq 2$. We need to decide if there exists a subset $P\subseteq [n]$ with $p= \vert P\vert =\frac{n}{2}$, such that $\sum_{i \in P}w_i=\frac{W}{2}$.

The arising instance of (\LSAP) builds upon the selection problem with $\mathcal{X}=\{\bm{x}\in\{0,1\}^{n+p} : \sum_{i\in[n+p]}x_i=p\}$. There are $N=3$ samples and a decision tree of depth $D=1$. The single split in the decision tree queries the cost of item $n+1$ with threshold $W$. The historical samples are defined as shown in Table~\ref{tab::reduction}, with $M\geq 3pW$ being a sufficiently large value.
\begin{table}[htb]
\centering
\caption{Samples of the cost vector\label{tab::reduction}}
\begin{tabular}{c|ccc|cccc}
item &$1$&$\cdots$ &$n$&$n+1$&$n+2$&$\cdots$ &$n+p$
\\\hline
$\bm{c}_1$ & $W-w_1$&$\cdots$&$W-w_n$&$M$&$M$&$\cdots$ &$M$\\
$\bm{c}_2$ & $M$&$\cdots$&$M$&$0$&$0$&$\cdots$ &$0$\\
$\bm{c}_3$ & $2w_1$&$\cdots$&$2w_n$&$W$&0&$\cdots$ &$0$\\
\end{tabular}
\end{table} 
Both for the global and the local budgeted uncertainty set, we set the budget to $1$. Hence, samples $\bm{c}_1$ and $\bm{c}_2$ are not affected by the adversary's perturbation and always end up in different leaves, which we name leaf $1$ and leaf $2$, respectively. 
The adversary can perturb the observation of item $n+1$ of 
$\bm{c}_3$, causing sample 3 to end up in either leaf. Let $\mu(1)=\{1\}$, $\mu(2)=\{2\}$ and $\mu(3)=\{1,2\}$ describe these mapping options of the adversary, which are valid for both types of uncertainty.

Due to the structure of $\bm{c}_1$, which is assigned to leaf $1$ in the decision tree, the solution $\bm{x}_1$ only selects from the first $n$ items.  Furthermore, using the same argument, in $\bm{x}_2$ none of the first $n$ items are selected, but instead all items $n+1$,\ldots, $n+p$ are chosen.

Let $I \in [n]$ be the set of items chosen in solution $\bm{x}_1$ and let $X(I)= \sum_{i\in I} w_i$. The costs with respect to sample $\bm{c}_1$ are always $pW-X(I)$, the costs with respect to sample $\bm{c}_2$ are always $0$, and for sample $\bm{c}_3$ they depend on the adversary's decision given by $\mu(3)$: If the first leaf is selected by the adversary, $\bm{x}_1$ is implemented resulting in costs of $2X(I)$. If the second leaf is selected the costs are $W$. Hence,

\begin{align*}
\min_{\bm{x}_1,\ldots,\bm{x}_{|K|} \in \mathcal{X}} \sum_{j\in[N]} \max_{k\in \mu(j)} \bm{c}_j^\top\bm{x}_k &= \min_{I\in[n]} \max\{pW-X(I)+2X(I),pW-X(I)+W\}\\
&= \min_{I\in[n]} pW+ \max\{X(I),W-X(I)\}\, .
\end{align*}
By selecting $I\in [n]$ such that $X(I)=\frac{W}{2}$   these costs are minimal. In particular, if and only if there is a solution to the partition instance, the optimal costs of the instance of the (\LSAP) instance are equal to $(p+\frac{1}{2})W$.
\end{proof}

Finally, note that for a budget of zero, regardless of the uncertainty set, the (\LSAP) problem decomposes to $K$ separate nominal problems $\min_{\bm{x}\in \mathcal{X}} \bar{\bm{c}}_k^\top\bm{x}$,
where $\bar{\bm{c}}_k$ is the sum of all scenarios that are assigned to leaf $k$ by the decision tree.
Hence, in this case (\LSAP) is in P, if the underlying optimization problem is in P.

\section{Computational Experiments}\label{sec:experiments}
\subsection{Setup}\label{subs:experiments:setup}

In this section, computational experiments testing our approaches are presented. In particular, we want to answer the three following sets of questions:
\begin{enumerate}
    \item Is there a correlation in the performance of a decision tree on both uncertainty sets presented in Section \ref{subs:optApproach:uncertainty}?
    \item To which extent do our methods generate more robust decision trees than methods which only take a nominal training environment into account? What is the benefit of relying on a (potentially vulnerable) tree structure vs. using only one solution?
    \item What is the trade-off between nominal and robust objective value? How well do the trees generated generalize to new data? How does the instance size influence the performance of our methods?
\end{enumerate}

For each experiment, an indicated subset of the methods presented in Sections \ref{sec:optApproach} and \ref{sec:heuristics} will be used. We will benchmark against decision trees which were trained following the approach from \cite{goerigk2023framework} using only nominal observations and therefore do not consider uncertainty. Instead of using the authors' MIP formulation, we use Model~\eqref{eq:mpGlob} considering only the unperturbed observations resulting in optimal decision trees for the nominal setting. We will refer to this method as the nominal one and use the representative \textit{\NomTree} in plots and tables. By using the nominal approach and \OneSol in every experiment as benchmarks, we cover both extreme cases: not considering uncertainty at all and hedging in the most conservative way possible (see Theorem~\ref{thm::OneSol}). 
Besides our heuristics \HTree, \HSol, and \HAlt, we also test the exact iterative solution approach, which is denoted as SG.

We perform tests on synthetic data for shortest path problems. The artificial instances consist of $n \times n$ grid graphs, in which all edges are directed from south to north and west to east, respectively. The objective is to find the shortest path from the node in the southwestern corner to the node in the northeastern corner. The costs of the edges are generated by using a variation of the procedure of \cite{goerigk2023framework}. Three basis scenarios were created, consisting of an individual range of possible cost values for every edge. To generate one observation, first, one of these basis scenarios was selected randomly. Then, for every edge a value representing its cost was sampled uniformly from the interval corresponding to this combination of basis scenario and edge.
 
We will refer to the $N$ samples as training data, and the 1000 data points used for evaluation as test data. The quality of solutions evaluated using the former is also referred to as in-sample performance, using the test data as out-of-sample performance. Furthermore, we will differentiate between the nominal performance, where we evaluate using only the unperturbed observations, and the robust performance where we evaluate using the observations perturbed in the most harmful way possible w.r.t.~to the indicated uncertainty set. Every data point in the following figures and tables is based on the averaged results of 20 instances. All 20 instances are different, but the same 20 instances were used for each data point.

For our experimental tests, both uncertainty sets that we introduced were considered. Instead of stating the absolute values of $\Gamma^{loc}$ and $\Gamma^{glob}$, we will report the parameter $\lambda \in [0,1]$ which is used to compute both. For a given instance, $\Gamma^{loc}$ can be computed by using
$$\Gamma^{loc} = \lambda D \max_{i\in[n]}\left\lbrace \max_{j\in[N]} c_{j,i}-\min_{j\in[N]}c_{j,i}\right\rbrace,$$
i.e., $\lambda > 1$ implies that for a given tree with a depth of at most $D$, every data point in our training data can be perturbed such that it could end up in every leaf (see Theorem~\ref{thm::OneSol}). In contrast, $\lambda = 0$ allows for no perturbation at all and is therefore equal to a setting with no uncertainty. The specific value used for $\Gamma^{glob}$ will we be written in dependency of $\Gamma^{loc}$.

The numerical experiments were set up using Python 3.11.2. Graphs were constructed and managed by using the networkx library \cite{networkx}. Furthermore, we used Gurobi \cite{gurobi} version 11.0 and its Python interface to solve the MIP formulations. For each combination of instance and method, a time limit of one hour was set. All experiments were conducted on two virtual machines with 8 cores running on 2.4 GHz and 12 GB memory each. Gurobi's core usage was limited to one and up to eight instances were solved in parallel per machine. For $\epsilon$, which was used during the pre- and post-processing of the adversary problem, a value of 0.001 was chosen to avoid numerical issues. The code and the generated data are available on GitHub\footnote{\url{https://github.com/sbstnmrtn/robust_interpretable_surrogates}}.

\subsection{Results}\label{subs:experiments:reults}
\subsubsection{First Experiment}
Two types of uncertainty sets are introduced in Section~\ref{subs:optApproach:uncertainty}. We want to examine if the performance of a decision tree in both settings is correlated. A strong correlation could imply that it is only necessary to protect against one of these uncertainty sets to perform well in both environments. 

To this end, we sample random decision trees and evaluate their robustness using both uncertainty sets. The possible outcomes for both uncertainty sets depend on the given budget. However, this has a different meaning in the two sets, which makes a meaningful comparison difficult. We therefore use two different settings throughout this experiment. First, we investigate the case where we set $\Gamma^{glob} = N\Gamma^{loc}$ which implies $\Uloc \subseteq \Uglob$. Those results are illustrated in Figures~\ref{fig::corr::005} to \ref{fig::corr::020}. Second, we examine the setting where $\Gamma^{glob} = \Gamma^{loc}$ and hence $\Uglob \subseteq \Uloc$, shown in Figures~\ref{fig::corr::005E} to \ref{fig::corr::020E}.
\begin{figure}[htb]
    \begin{subfigure}[t]{0.24\textwidth}
         \centering
         \includegraphics[width=1\linewidth]{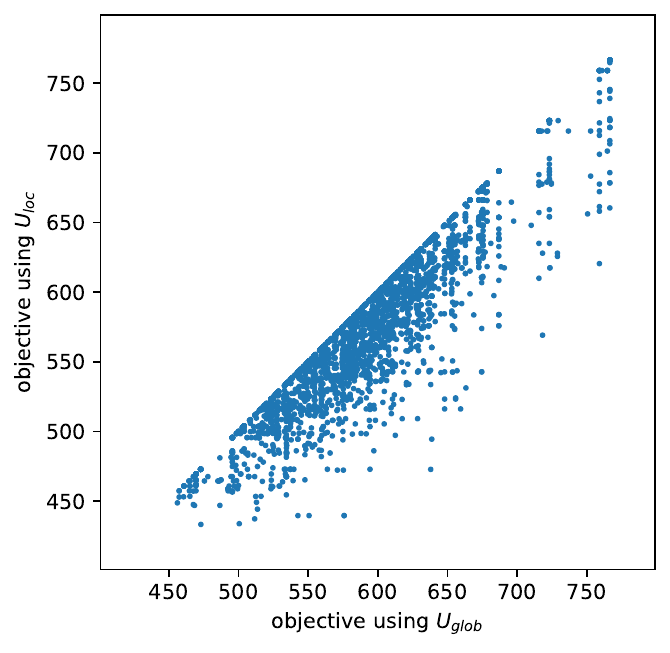}
         \caption{$\Gamma^{glob}= N\Gamma^{loc},\\ \lambda=0.05, r=0.92$.\label{fig::corr::005}}
    \end{subfigure}
    \begin{subfigure}[t]{0.24\textwidth}
         \centering
         \includegraphics[width=1\linewidth]{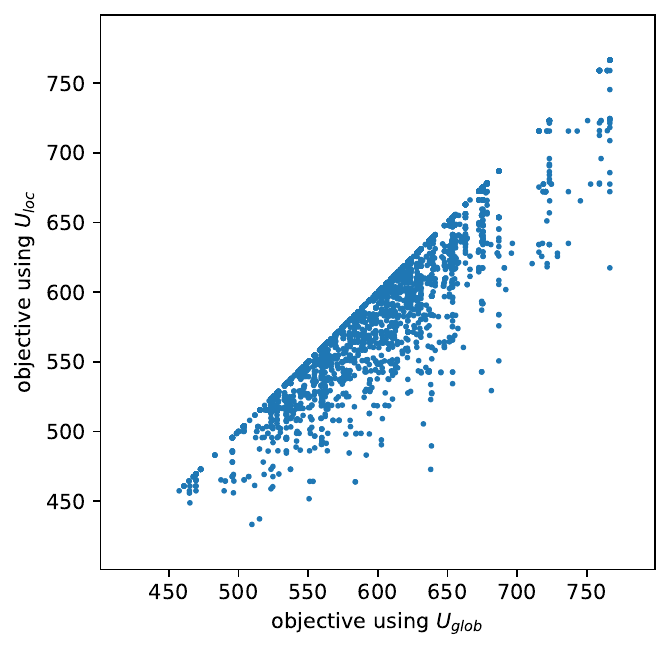}
         \caption{$\Gamma^{glob}= N\Gamma^{loc},\\ \lambda=0.10, r=0.92$. \label{fig::corr::010}}
    \end{subfigure}
    \begin{subfigure}[t]{0.24\textwidth}
         \centering
         \includegraphics[width=1\linewidth]{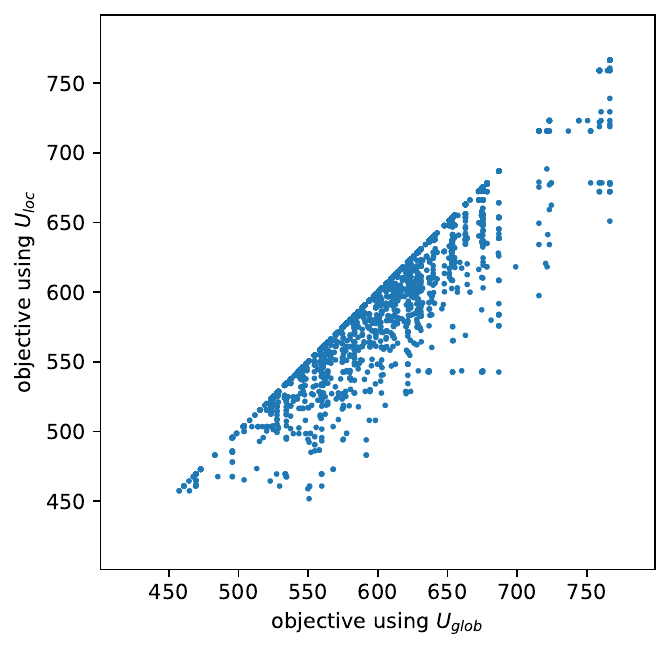}
         \caption{$\Gamma^{glob}= N\Gamma^{loc},\\ \lambda=0.15, r=0.94$.}
         \label{fig::corr::015}
    \end{subfigure}
    \begin{subfigure}[t]{0.24\textwidth}
         \centering
         \includegraphics[width=1\linewidth]{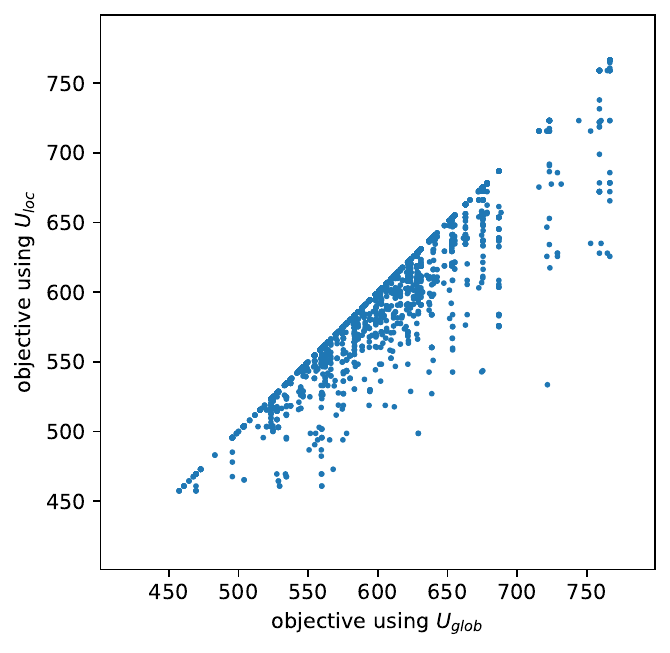}
         \caption{$\Gamma^{glob}= N\Gamma^{loc},\\ \lambda=0.20, r=0.95$.}
         \label{fig::corr::020}
    \end{subfigure}
    \begin{subfigure}[t]{0.24\textwidth}
         \centering
         \includegraphics[width=1\linewidth]{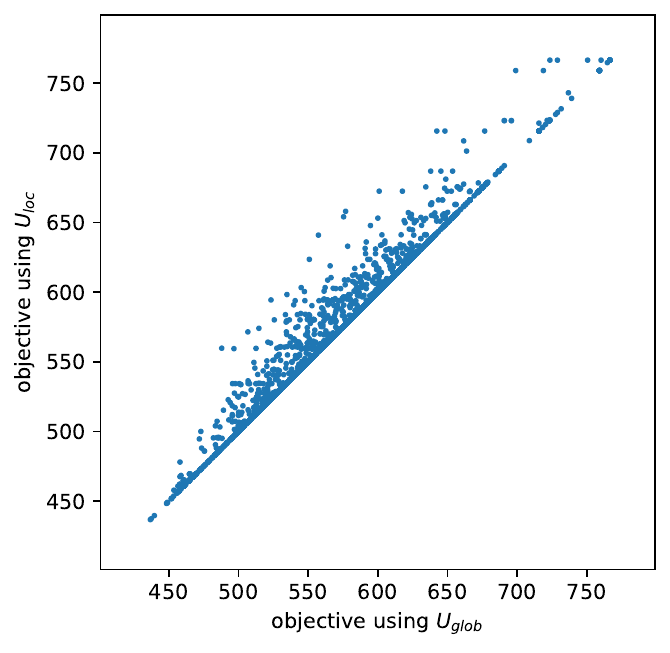}
         \caption{$\Gamma^{glob}= \Gamma^{loc},\\ \lambda=0.05, r=0.99$.}
         \label{fig::corr::005E}
    \end{subfigure}
    \begin{subfigure}[t]{0.24\textwidth}
         \centering
         \includegraphics[width=1\linewidth]{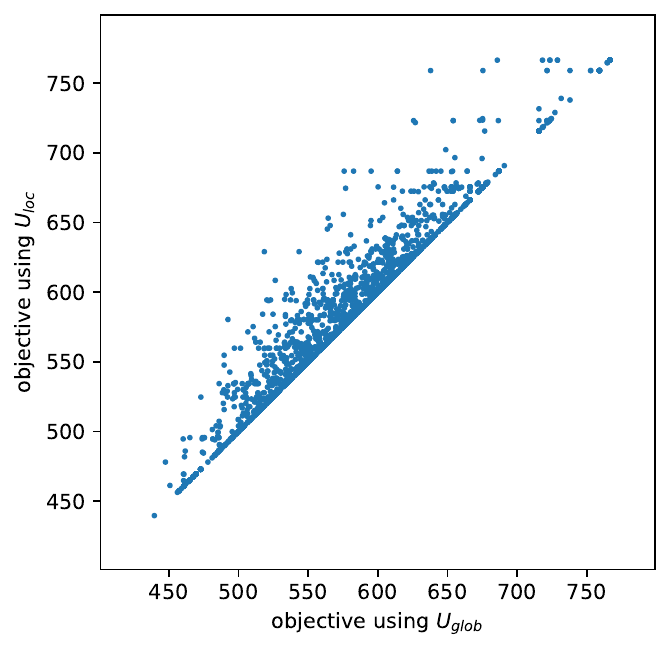}
         \caption{$\Gamma^{glob}= \Gamma^{loc},\\ \lambda=0.10, r=0.97$.}
         \label{fig::corr::010E}
    \end{subfigure}
    \begin{subfigure}[t]{0.24\textwidth}
         \centering
         \includegraphics[width=1\linewidth]{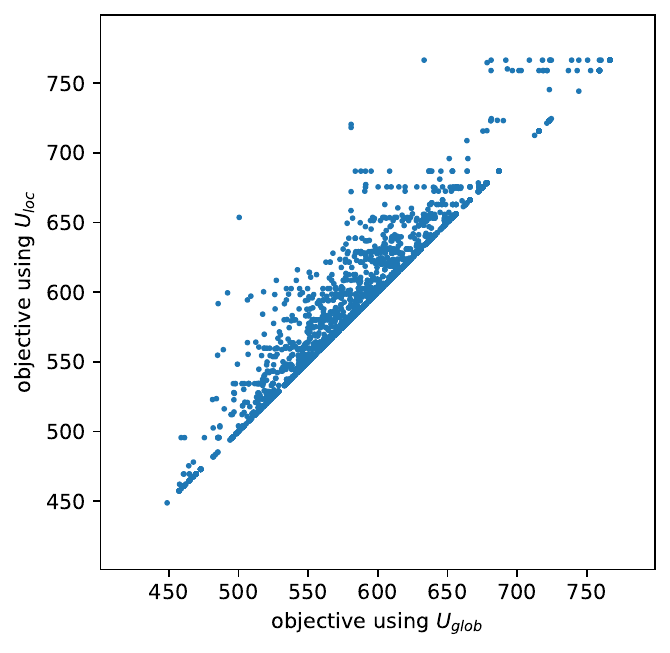}
         \caption{$\Gamma^{glob}= \Gamma^{loc},\\ \lambda=0.15, r=0.97$.}
         \label{fig::corr::015E}
    \end{subfigure}
    \begin{subfigure}[t]{0.24\textwidth}
         \centering
         \includegraphics[width=1\linewidth]{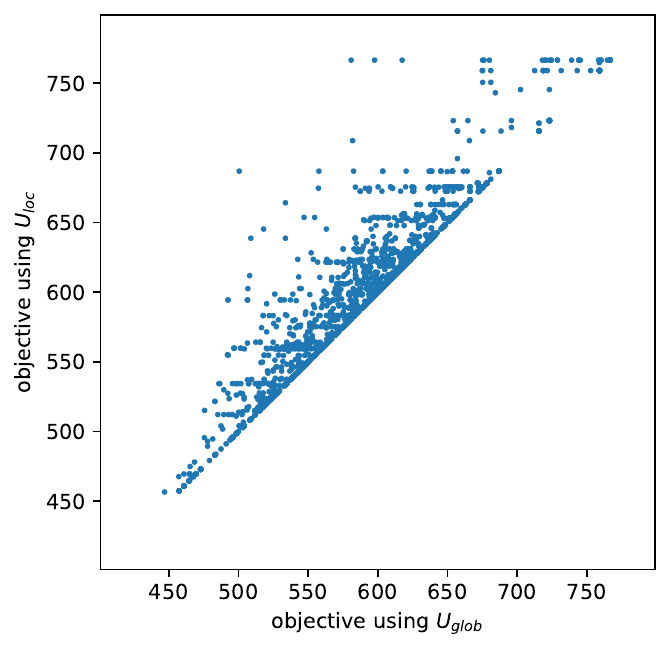}
         \caption{$\Gamma^{glob}= \Gamma^{loc},\\ \lambda=0.20, r=0.96$.}
         \label{fig::corr::020E}
    \end{subfigure}
    \caption{Plots experiment 1.    \label{fig::corr}}

\end{figure}

For every plot we used 20 cost matrices, but the same 20 for every plot, each consisting of $N=5$ observations on a $4\times4$ grid graph and sampled 200  surrogates.
Each plot therefore contains 4,000 data points. The robust in-sample performance was determined for each surrogate by solving Problem~\eqref{eq:apGlob} with respect to the given tree structure and both uncertainty sets. Each surrogate is represented by one point, where the horizontal coordinate indicates its performance w.r.t. $\Uglob$, and its vertical coordinate indicates its performance on $\Uloc$. The Pearson correlation coefficients are presented as $r$ in the captions.

It can be seen that there is a strong (linear) correlation for all tested settings. For the case where $\Gamma^{glob} = N\Gamma^{loc}$ an increase in correlation for increasing values of $\lamloc$ can be determined.
An explanation for this is
that for a growing portion of the trees, an increase in $\lambda$ does not lead to an increase in objective value w.r.t. to $\Uglob$. As stated in Theorem~\ref{thm::OneSol}, setting $\lamloc$ to a value greater that one
for $\Gamma^{glob} = N\Gamma^{loc}$ allows the adversary to perturb the dataset such that every observation can end up in every leaf given any tree as described in Section~\ref{subs:optApproach:uncertainty}. For one specific decision tree, eventually a value of $\lamloc$ of less than one can be sufficient to already allow to perturb the dataset in such a way. Since $\Uloc(\Gamma^{loc}) \subseteq \Uglob(N\Gamma^{loc})$, the values of $\lambda$ where this effect can be seen w.r.t. $\Uglob(N\Gamma^{loc})$ are therefore smaller or equal than those for $\Uloc(\Gamma^{loc})$. Thus, the increasing correlation can be explained by the effect that an increase in $\lambda$ raises the objective value using $\Uloc$ and therefore brings it closer to its upper limit given by objective using $\Uglob$.

In contrast, in the setting where $\Gamma^{glob} = \Gamma^{loc}$ the correlation decreases with rising $\lambda$. Here the increase in $\lambda$ allows for disproportionately worse perturbations for $\Uglob$ than for $\Uloc$. This effect will invert after a specific value of $\lambda$, analogously to the case described before.

\subsubsection{Second Experiment}
In this experiment, the methods presented in Sections~\ref{sec:optApproach} and \ref{sec:heuristics} are compared with respect to their robust in-sample performance using both uncertainty sets. Furthermore, the influence of different values of $\lambda$ is investigated. For all results discussed in the context of this experiment, instances with $n = 4$ and $N=5$ were used. We have set $\Gamma^{glob}= N\Gamma^{loc}$.

Figures~\ref{fig::gamma::glob} and \ref{fig::gamma::loc} illustrate the robust in-sample objective value of the best surrogate which could be found within the time limit using each method evaluated on the global and local budgeted uncertainty set, respectively. On the horizontal axis the value of $\lamloc$ which was used for training as well the evaluation is indicated. Solid lines are utilized to indicate methods which were trained with respect to the uncertainty set used for evaluation. Dotted lines specify methods optimized for the respective other uncertainty set. An increment of 0.01 was selected for the interval $\lambda \in [0,0.1]$. For $\lambda \in [0.1,0.2]$ an increment of 0.02 was chosen.

\begin{figure}[htb]
    \centering
         \includegraphics[width=.8\linewidth]{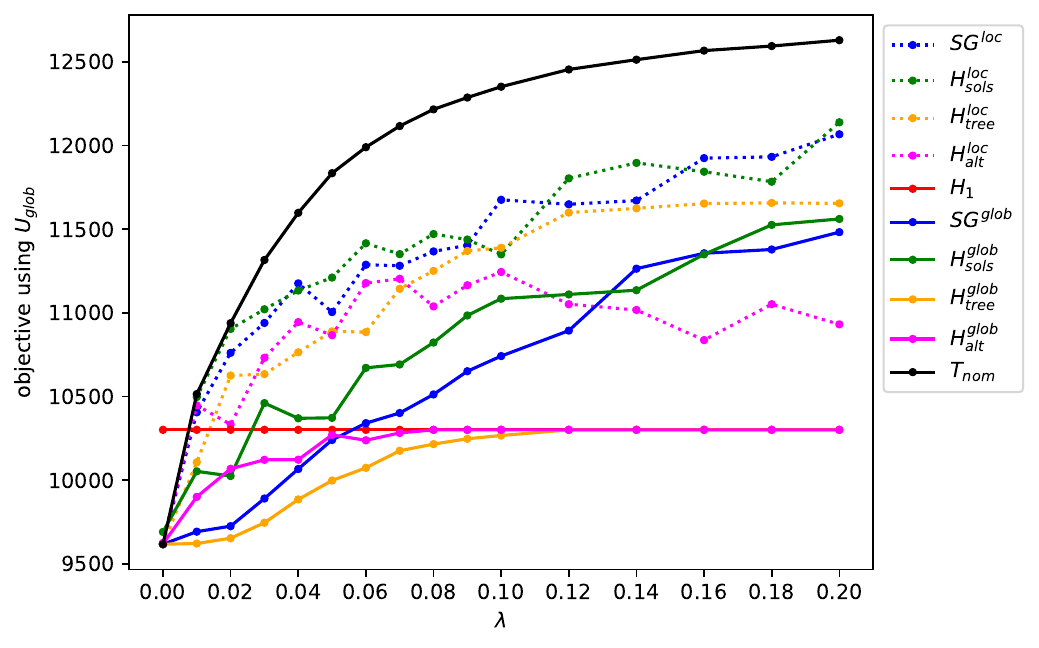}
         \caption{Comparison of solution methods evaluated using robust objective values in-sample with $\Uglob$.}
         \label{fig::gamma::glob}
    \end{figure}
    
\begin{figure}[htb]
         \centering
         \includegraphics[width=.8\linewidth]{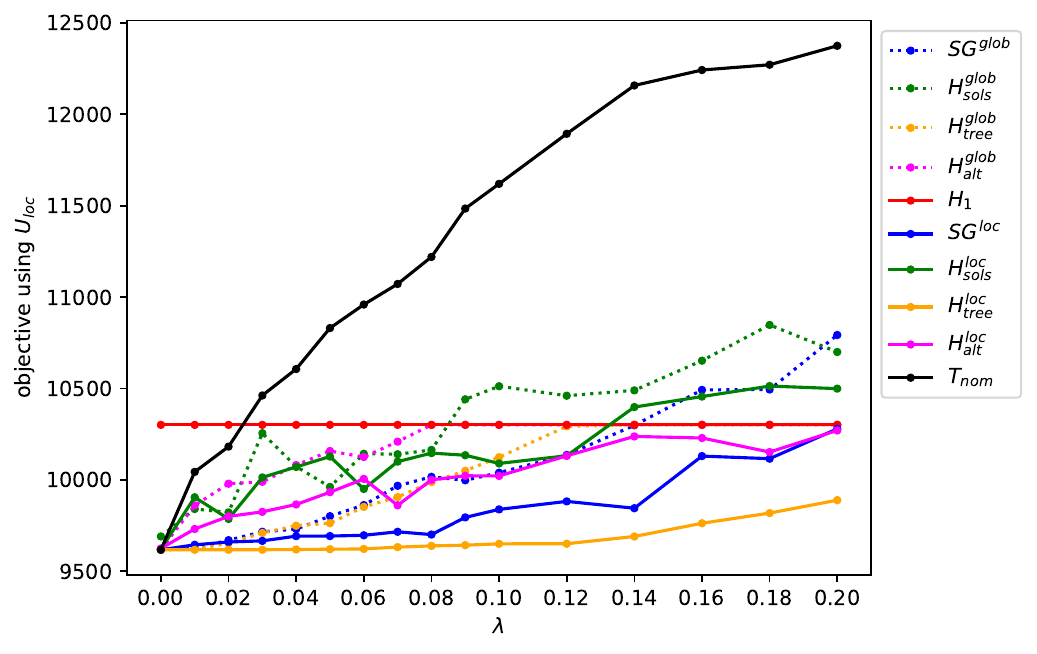}
         \caption{Comparison of solution methods evaluated using robust objective values in-sample with $\Uloc$.}
         \label{fig::gamma::loc}
\end{figure}

It can be seen that when using the global budgeted uncertainty set for evaluation,
it is beneficial to use either \HTreeGlob or \HAltGlob
for generating robust decision trees when $\lambda \in [0.01, 0.1]$.
For $\lamloc=0$, i.e. a setting without uncertainty, it is advantageous to use the nominal approach for constructing a tree. In the heuristics case it is due to the fact that they are not guaranteed to find an optimal solution. Furthermore, we can not benefit from using our scenario generation approach, which would terminate after the first iteration. For values of $\lamloc > 0.1$ (i.e. a setting with a high degree of uncertainty), it is best to use \OneSol, \HAltGlob or \HTreeGlob.

Figure~\ref{fig::gamma::loc} shows that the observations also show rising objective values for an increasing value of $\lambda$ in the setting using $\Uloc$. In contrast to the previous results, the increase here is less strong. This is congruent with the observations in the first experiment and our theoretical considerations in Section \ref{subs:optApproach:uncertainty}. As shown in Theorem \ref{thm::OneSol} at least for a $\lambda$ value of one, the use of any of our methods will result in a solution that only is as good as \OneSol in the best case. In fact, we would expect for a subset of our instances and some $\lambda < 1$ to not be able to find a better solution than \OneSol (as observed in Figure \ref{fig::gamma::glob} using $\Uglob$). For this setting, however, our experiments did not cover the point at which the usage of \OneSol would be advantageous.

Despite the strong correlation between the performance of a tree on both uncertainty sets, it becomes obvious that it is better to hedge against the specific uncertainty set. Still, in nearly every case, it is better to have optimized using the respective other uncertainty set compared to not considering uncertainty at all with \NomTree.

In terms of comparison between our methods, \HTree unambiguously outperforms all other methods with respect to both uncertainty sets. It can be observed that the heuristics \HAlt and \HTree indeed find surrogates which are always better or equal to \OneSol (see Corollary \ref{cor::HTreeHAlt}). \HSol performs for most instances noticeably worse than the other robust methods used. For relatively small values of $\lamloc$, good solutions can be found using the iterative solution approach.
The surprisingly poor performance of this (potentially optimal) approach for larger values of $\lambda$ can be explained by the fact that in most cases it was not possible to find optimal solutions within the time limit of one hour. Table \ref{tab:solvedopt} shows the number of instances in dependency of $\lambda$ and the uncertainty set for which solutions with an optimality gap smaller or equal to $0.001$ could be found using this approach. It can be seen that the problems become harder to solve with an increasing value of $\lambda$. Furthermore, problems using $\Uglob$ tend to be harder to solve than problems using $\Uloc$ for the same $\lambda$.

\begin{table}[htb]
\centering
    \caption{Number of instances solved to optimality by \SG. \label{tab:solvedopt}}
   
\begin{tabular}{c|rrrrrrrrrrr}
 $\lambda$    & 0.00 & 0.01 & 0.02 & 0.03 & 0.04 & 0.05 & 0.06 & 0.07 & 0.08 & 0.09 & $\geq$0.10\\
     \hline
    $\Uloc$ &  20 & 12 & 10 & 14 & 6 & 8 & 6 & 5 & 7 & 3 & 11 \\
    $\Uglob$ & 20 & 8  &  2 &  1 & 1 & 0 & 0 & 0 & 0 & 0 & 0 
\end{tabular}
\end{table}

Since \HTree was observed to perform best, for further experiments we will focus on this method in combination with both benchmark approaches.

\subsubsection{Third Experiment}
The experiments described in this section
were conducted varying the number of scenarios~($N$) used during applying our methods as well as the grid size of the underlying shortest path instances~($n$). We consider $\lambda:=0.05,\ \Gamma^{glob}:= N\Gamma^{loc}$ and make use of only \HTree, \OneSol and the nominal approach. The absolute budget which was used for the training was also used for the evaluation on the test data.

The results are presented in Table \ref{tab:grid_train::nom} and \ref{tab:grid_train::rob}.
All results are scaled to the objective value of the nominal approach with the respective uncertainty set using the formula  $$\textnormal{obj}^{scaled} =\frac{\textnormal{obj}-\textnormal{obj}^{nom}}{\textnormal{obj}^{nom}}.$$
That is, positive scaled values indicate that the nominal solution performs better, while negative scaled values mean that the comparison method performs better.
In Table \ref{tab:grid_train::nom} the scaled nominal objective is presented. In contrast, Table \ref{tab:grid_train::rob} illustrates the scaled robust objective value.

In Table \ref{tab:grid_train::rob}, we therefore distinguish between \OneSolGlob and \OneSolLoc, even though the absolute objective value of the solution of \OneSol does not depend on the uncertainty set used.

\begin{table}[ht]
    \centering
    \caption{Relative nominal objective value (\%).}
    \label{tab:grid_train::nom}
\begin{tabular}{lll|rrr}
&&& \multicolumn{3}{|c}{Method} \\
 & $N$ & $n$ & \OneSol & \HTreeGlob & \HTreeLoc \\
\hline
\multirow{8}{*}{\rotatebox[origin=c]{90}{Training}} & 3 & 4 & 6.7056 & 0.7510 & 0.0000 \\
& 5 & 4 & 7.1063 & 1.2516 & 0.0019 \\
& 7 & 4 & 8.2096 & 2.0466 & 0.0751 \\
& 10& 4 & 9.0149 & 2.0417 & 0.2371 \\
\cline{2-6}
& 5 & 3 & 6.7006 & 1.2590 & 0.1231 \\
& 5 & 4 & 7.1063 & 1.2516 & 0.0019 \\
& 5 & 5 & 8.4293 & 1.2371 & 0.0043 \\
& 5 & 6 & 8.9419 & 1.3581 & 0.0017 \\
\hline
\multirow{8}{*}{\rotatebox[origin=c]{90}{Test}} & 3 & 4 & 2.3120 & -2.0299 & -2.2002 \\
& 5 & 4 & 4.2236 & -1.7767 & -2.2085 \\
& 7 & 4 & 7.3464 & 0.3625 & -2.0039 \\
& 10 & 4 & 6.0712 & -0.9286 & -3.2824 \\
\cline{2-6}
& 5 & 3 & 4.0298 & -1.6097 & -1.5254 \\
& 5 & 4 & 4.2236 & -1.7767 & -2.2085 \\
& 5 & 5 & 5.1832 & -0.7755 & -2.6882 \\
& 5 & 6 & 3.7592 & -3.1228 & -3.6670 \\
\end{tabular}

\end{table}

Table \ref{tab:grid_train::nom} shows that the loss of \HTree in nominal performance on the training data is genuinely small and varies between zero and about two percent. Especially, when trained with respect to $\Uloc$ it reveals only a slightly worse performance in this setting.

The results furthermore suggest that while our methods perform only slightly worse than \NomTree on the training data in a nominal setting, they tend to outperform \NomTree on unseen test data, even in a setting with no uncertainty. This effect occurs in particular if $\Uloc$ is considered during the training, and seems to intensify with increasing instance size. Moreover, \HTree achieves significantly better performance than \OneSol with respect to both measures.

Table \ref{tab:grid_train::rob} displays the robust performance. Here, \HTree shows clearly better results than \OneSol and \NomTree. In both cases, the relative difference in performance between \HTree and \NomTree is bigger on the training than on the test data. It is striking that \HTreeGlob tends to perform better than \HTreeLoc on the training data, but this effect is reversed when the test data is considered. This can be explained by the use of the same absolute values for $\Gamma^{glob}$ in the training and test environment, which reduces its relative impact in the latter case. For \HTreeGlob the relative robust in-sample objective value with an increasing number of training scenarios seems to increase. No clear trends can be identified for this method in relation to the other tests. In contrast, for \HTreeLoc the relative robust out-of-sample objective values tends to decrease with an increasing number of training scenarios used or increased grid size. Furthermore the robust in-sample performance benefits from an increasing grid size.

\begin{table}[ht]
    \centering
    \caption{Relative robust objective value (\%).}
    \label{tab:grid_train::rob}
\begin{tabular}{lll|rr|rr}
&&& \multicolumn{4}{|c}{Method} \\
& $N$ & $n$ & \OneSolGlob & \HTreeGlob & \OneSolLoc & \HTreeLoc \\
\hline
\multirow{8}{*}{\rotatebox[origin=c]{90}{Training}} & 3 & 4 & -10.6319 & -15.5742 & -3.4331 & -9.5016 \\
& 5 & 4 & -12.9597 & -15.5188 & -4.8888 & -11.1712 \\
& 7 & 4 & -11.8572 & -13.6847 & -3.0417 & -10.3208\\
& 10 & 4 & -10.9140 & -12.2569 & -2.7013 & -10.4983 \\
\cline{2-7}
& 5 & 3 & -10.6479 & -12.9628 & -2.2908 & -8.3140 \\
& 5 & 4 & -12.9597 & -15.5188 & -4.8888 & -11.1712 \\
& 5 & 5 & -10.2432 & -13.5515 & -2.5616 & -10.1326 \\
& 5 & 6 & -12.6607 & -15.9659 & -7.8116 & -15.3566 \\
\hline
\multirow{8}{*}{\rotatebox[origin=c]{90}{Test}} & 3 & 4 & 1.5811 & -2.4949 & -2.4344 & -3.6340 \\
& 5 & 4 & 3.0310 & -2.6985 & -2.4306 & -6.4182 \\
& 7 & 4 & 5.9188 & -0.7639 & -0.2548 & -6.8499 \\
& 10 & 4 & 4.1246 & -2.5499 & -1.6606 & -8.0147 \\
\cline{2-7}
& 5 & 3 & 3.0721 & -2.3654 & -1.4542 & -4.6416 \\
& 5 & 4 & 3.0310 & -2.6985 & -2.4306 & -6.4182 \\
& 5 & 5 & 4.1731 & -1.5007 & -1.1265 & -5.6967 \\
& 5 & 6 & 2.2529 & -4.2903 & -4.7133 & -8.2608 \\
\end{tabular}

\end{table}

\section{Conclusion}\label{sec:conclusion}

To ensure that results of mathematical optimization are used in practice by decision makers, an important requirement is that the solution process needs to be transparent. For this purpose, we studied the framework for finding interpretable surrogates as first introduced in \cite{goerigk2023framework}, which uses a decision tree to map instance data (in this case, cost vectors) to solutions.

While this improves the comprehensibility of the optimization process, real-world decision making is further complicated by the presence of uncertainty. Indeed, it is rarely the case that we have completely accurate information available at the point in time when a decision is taken. For this reason, we propose to extend the scope of interpretable surrogates to also include a robust optimization aspect, which protects against worst-case perturbations of the data we observe. We introduced two uncertainty sets for this purpose, which differ regarding how the size of perturbations on historic observations is bounded. Using an iterative solution process, it is possible to find optimal robust decision trees. We note that finding a worst-case perturbation depends on the type of uncertainty set that is applied. However, this approach is limited to small-scale problems due to its high computational effort. For this reason, we introduced heuristics which are based on solving only one aspect of the optimization problem, i.e., we only focus on finding best solutions in the leaves or on finding a decision tree for given leaf solutions. In a theoretical analysis of the problem complexity, we show that the problem simplifies if the perturbation budget is sufficiently large. Furthermore, we show that the problem of determining best leaf solutions for a given decision tree is already NP-hard.

We conducted several computational experiments to evaluate our approach. We first note that the worst-case objective values stemming from both types of uncertainty sets are correlated, which is an indication that easier-to-treat local budgeted uncertainty sets may be a preferred choice in practice. We then compare the performance of our heuristics and the exact approach with the performance of the nominal solution, that ignores uncertainty. Depending on the size of the uncertainty, we find that robust objective values can be considerably reduced using our methods. As robust objective values represent only one side of the coin, we also evaluate the performance of our approach in more detail using in-sample and out-of-sample objective values using nominal and perturbed scenarios. We find that at small costs with respect to nominal performance, it is possible to improve on all other metrics, and in particular, obtain solutions that perform better out-of-sample.

Several interesting challenges for further research arise. While our methods perform well, they do not provide approximation guarantees, and they are slower than the nominal approach. Finding performance guarantees and further improvements regarding their speed could further strengthen this approach. For simplicity, we introduced one budget $\Gamma^{loc}$ for all observations $j\in[N]$. A straight-forward extension is to allow different budgets $\Gamma^{loc}_j$ for different observations, as well as upper bounds on perturbations $\xi_{j,i}$. Indeed, extending the idea of more flexible uncertainty sets further, an interesting problem is how to design data-driven uncertainty sets, which can be used to estimate possible perturbations from historical data.

\addcontentsline{toc}{section}{References}

\newcommand{\etalchar}[1]{$^{#1}$}


\begin{thebibliography}{CKMM23}

\bibitem[ABV09]{aissi2009min}
Hassene Aissi, Cristina Bazgan, and Daniel Vanderpooten.
\newblock Min--max and min--max regret versions of combinatorial optimization
  problems: A survey.
\newblock {\em European Journal of Operational Research}, 197(2):427--438,
  2009.

\bibitem[ADZ22]{amram2022optimal}
Maxime Amram, Jack Dunn, and Ying~Daisy Zhuo.
\newblock Optimal policy trees.
\newblock {\em Machine Learning}, 111(7):2741--2768, 2022.

\bibitem[AGH{\etalchar{+}}24]{aigner2024framework}
Kevin-Martin Aigner, Marc Goerigk, Michael Hartisch, Frauke Liers, and Arthur
  Miehlich.
\newblock A framework for data-driven explainability in mathematical
  optimization.
\newblock {\em Proceedings of the AAAI Conference on Artificial Intelligence},
  38(19):20912--20920, Mar. 2024.

\bibitem[AGV24]{aghaei2024strong}
Sina Aghaei, Andr{\'e}s G{\'o}mez, and Phebe Vayanos.
\newblock Strong optimal classification trees.
\newblock {\em Operations Research}, 2024.

\bibitem[BCGV23]{blanquero2023explainable}
Rafael Blanquero, Emilio Carrizosa, and Nuria G{\'o}mez-Vargas.
\newblock Explainable predict-and-optimize.
\newblock {\em Preprint available online}, 2023.

\bibitem[BDF{\etalchar{+}}23]{blockeel2023decision}
Hendrik Blockeel, Laurens Devos, Beno{\^\i}t Fr{\'e}nay, G{\'e}raldin Nanfack,
  and Siegfried Nijssen.
\newblock Decision trees: from efficient prediction to responsible {AI}.
\newblock {\em Frontiers in Artificial Intelligence}, 6:1124553, 2023.

\bibitem[BDG{\etalchar{+}}22]{bastani2022interpretable}
Hamsa Bastani, Kimon Drakopoulos, Vishal Gupta, Jon Vlachogiannis, Christos
  Hadjichristodoulou, Pagona Lagiou, Gkikas Magiorkinis, Dimitrios Paraskevis,
  and Sotirios Tsiodras.
\newblock Interpretable operations research for high-stakes decisions:
  Designing the {Greek} {COVID}-19 testing system.
\newblock {\em INFORMS Journal on Applied Analytics}, 52(5):398--411, 2022.

\bibitem[BDJ23]{bertsimas2023improving}
Dimitris Bertsimas and Vassilis Digalakis~Jr.
\newblock Improving stability in decision tree models.
\newblock {\em arXiv preprint arXiv:2305.17299}, 2023.

\bibitem[BDM19]{bertsimas2019optimal}
Dimitris Bertsimas, Jack Dunn, and Nishanth Mundru.
\newblock Optimal prescriptive trees.
\newblock {\em INFORMS Journal on Optimization}, 1(2):164--183, 2019.

\bibitem[BGK18]{bertsimas2018data}
Dimitris Bertsimas, Vishal Gupta, and Nathan Kallus.
\newblock Data-driven robust optimization.
\newblock {\em Mathematical Programming}, 167:235--292, 2018.

\bibitem[BOW21]{bertsimas2021interpretable}
Dimitris Bertsimas, Agni Orfanoudaki, and Holly Wiberg.
\newblock Interpretable clustering: an optimization approach.
\newblock {\em Machine Learning}, 110:89--138, 2021.

\bibitem[BS03]{bertsimas2003robust}
Dimitris Bertsimas and Melvyn Sim.
\newblock Robust discrete optimization and network flows.
\newblock {\em Mathematical Programming}, 98(1):49--71, 2003.

\bibitem[BS04]{bertsimas2004price}
Dimitris Bertsimas and Melvyn Sim.
\newblock The price of robustness.
\newblock {\em Operations Research}, 52(1):35--53, 2004.

\bibitem[CDG19]{chassein2019algorithms}
Andr{\'e} Chassein, Trivikram Dokka, and Marc Goerigk.
\newblock Algorithms and uncertainty sets for data-driven robust shortest path
  problems.
\newblock {\em European Journal of Operational Research}, 274(2):671--686,
  2019.

\bibitem[CGMS24]{corrente2024explainable}
Salvatore Corrente, Salvatore Greco, Benedetto Matarazzo, and Roman
  S{\l}owi{\'n}ski.
\newblock Explainable interactive evolutionary multiobjective optimization.
\newblock {\em Omega}, 122:102925, 2024.

\bibitem[CKMM23]{carrizosa2023clustering}
Emilio Carrizosa, Kseniia Kurishchenko, Alfredo Mar{\'\i}n, and Dolores~Romero
  Morales.
\newblock On clustering and interpreting with rules by means of mathematical
  optimization.
\newblock {\em Computers \& Operations Research}, 154:106180, 2023.

\bibitem[{\v{C}}LMT19]{vcyras2019argumentation}
Kristijonas {\v{C}}yras, Dimitrios Letsios, Ruth Misener, and Francesca Toni.
\newblock Argumentation for explainable scheduling.
\newblock In {\em Proceedings of the AAAI Conference on Artificial
  Intelligence}, volume~33, pages 2752--2759, 2019.

\bibitem[CM22]{ciocan2022interpretable}
Dragos~Florin Ciocan and Velibor~V. Mi{\v{s}}i{\'c}.
\newblock Interpretable optimal stopping.
\newblock {\em Management Science}, 68(3):1616--1638, 2022.

\bibitem[CP23]{costa2023recent}
Vin{\'\i}cius~G. Costa and Carlos~E. Pedreira.
\newblock Recent advances in decision trees: An updated survey.
\newblock {\em Artificial Intelligence Review}, 56(5):4765--4800, 2023.

\bibitem[CZBH19]{chen2019robust}
Hongge Chen, Huan Zhang, Duane Boning, and Cho-Jui Hsieh.
\newblock Robust decision trees against adversarial examples.
\newblock In {\em International Conference on Machine Learning}, pages
  1122--1131. PMLR, 2019.

\bibitem[EG22]{elmachtoub2022smart}
Adam~N. Elmachtoub and Paul Grigas.
\newblock Smart “predict, then optimize”.
\newblock {\em Management Science}, 68(1):9--26, 2022.

\bibitem[FPV23]{forel2023explainable}
Alexandre Forel, Axel Parmentier, and Thibaut Vidal.
\newblock Explainable data-driven optimization: from context to decision and
  back again.
\newblock In {\em International Conference on Machine Learning}, pages
  10170--10187. PMLR, 2023.

\bibitem[GH23]{goerigk2023framework}
Marc Goerigk and Michael Hartisch.
\newblock A framework for inherently interpretable optimization models.
\newblock {\em European Journal of Operational Research}, 310(3):1312--1324,
  2023.

\bibitem[GH24]{goerigkhartisch2024rob}
Marc Goerigk and Michael Hartisch.
\newblock {\em An Introduction to Robust Combinatorial Optimization}.
\newblock Springer Cham, 2024.

\bibitem[GHMS24]{goerigk2024feature}
Marc Goerigk, Michael Hartisch, Sebastian Merten, and Kartikey Sharma.
\newblock Feature-based interpretable surrogates for optimization.
\newblock {\em arXiv preprint arXiv:2409.01869}, 2024.

\bibitem[GJ79]{garey1979computers}
Michael~R. Garey and David~S. Johnson.
\newblock {\em Computers and intractability}, volume 174.
\newblock Freeman San Francisco, 1979.

\bibitem[{Gur}24]{gurobi}
{Gurobi Optimization, LLC}.
\newblock {Gurobi Optimizer Reference Manual}, 2024.

\bibitem[GYdH15]{gorissen2015practical}
Bram~L. Gorissen, {\.I}hsan Yan{\i}ko{\u{g}}lu, and Dick den Hertog.
\newblock A practical guide to robust optimization.
\newblock {\em Omega}, 53:124--137, 2015.

\bibitem[HSS08]{networkx}
Aric~A. Hagberg, Daniel~A. Schult, and Pieter~J. Swart.
\newblock Exploring network structure, dynamics, and function using networkx.
\newblock In {\em Proceedings of the 7th Python in Science Conference
  (SciPy2008)}, pages 11--15, 2008.

\bibitem[JAGV22]{justin2022optimal}
Nathan Justin, Sina Aghaei, Andres Gomez, and Phebe Vayanos.
\newblock Optimal robust classification trees.
\newblock In {\em The AAAI-22 Workshop on Adversarial Machine Learning and
  Beyond}, 2022.

\bibitem[KB23]{korikov2023objective}
Anton Korikov and J.~Christopher Beck.
\newblock Objective-based counterfactual explanations for linear discrete
  optimization.
\newblock In {\em International Conference on Integration of Constraint
  Programming, Artificial Intelligence, and Operations Research}, pages 18--34.
  Springer, 2023.

\bibitem[KBH24]{kurtz2024counterfactual}
Jannis Kurtz, {\c{S}}.~{\.I}lker Birbil, and Dick~den Hertog.
\newblock Counterfactual explanations for linear optimization.
\newblock {\em arXiv preprint arXiv:2405.15431}, 2024.

\bibitem[KPP04]{kellerer2004multiple}
Hans Kellerer, Ulrich Pferschy, and David Pisinger.
\newblock The multiple-choice knapsack problem.
\newblock In {\em Knapsack Problems}, pages 317--347. Springer Berlin
  Heidelberg, 2004.

\bibitem[LR76]{laurent1976constructing}
Hyafil Laurent and Ronald~L. Rivest.
\newblock Constructing optimal binary decision trees is {NP}-complete.
\newblock {\em Information Processing Letters}, 5(1):15--17, 1976.

\bibitem[PAG24]{portoleau2023robust}
Tom Portoleau, Christian Artigues, and Romain Guillaume.
\newblock Robust decision trees for the multi-mode project scheduling problem
  with a resource investment objective and uncertain activity duration.
\newblock {\em European Journal of Operational Research}, 312(2):525--540,
  2024.

\bibitem[Rud19]{rudin2019stop}
Cynthia Rudin.
\newblock Stop explaining black box machine learning models for high stakes
  decisions and use interpretable models instead.
\newblock {\em Nature Machine Intelligence}, 1(5):206--215, 2019.

\bibitem[SZ79]{sinha1979multiple}
Prabhakant Sinha and Andris~A. Zoltners.
\newblock The multiple-choice knapsack problem.
\newblock {\em Operations Research}, 27(3):503--515, 1979.

\bibitem[TBBN22]{tierney2022explaining}
Kevin Tierney, Kaja Balzereit, Andreas Bunte, and Oliver Nieh{\"o}rster.
\newblock Explaining solutions to multi-stage stochastic optimization problems
  to decision makers.
\newblock In {\em 2022 IEEE 27th International Conference on Emerging
  Technologies and Factory Automation (ETFA)}, pages 1--4. IEEE, 2022.

\bibitem[VV21]{vos2021efficient}
Dani{\"e}l Vos and Sicco Verwer.
\newblock Efficient training of robust decision trees against adversarial
  examples.
\newblock In {\em International Conference on Machine Learning}, pages
  10586--10595. PMLR, 2021.

\bibitem[VV22]{vos2022robust}
Daniël Vos and Sicco Verwer.
\newblock Robust optimal classification trees against adversarial examples.
\newblock {\em Proceedings of the AAAI Conference on Artificial Intelligence},
  36(8):8520--8528, Jun. 2022.

\bibitem[ZZ13]{zeng2013solving}
Bo~Zeng and Long Zhao.
\newblock Solving two-stage robust optimization problems using a
  column-and-constraint generation method.
\newblock {\em Operations Research Letters}, 41(5):457--461, 2013.

\end{thebibliography}
\end{document}